\title{Projected Stochastic Gradient Descent with Quantum Annealed Binary Gradients}
\newcommand{\nametwo}{Projected Stochastic Binary-Gradient Descent}
\newcommand{\name}{Quantum Projected Stochastic Binary-Gradient Descent}
\newcommand{\nameShortTwo}{P-SBGD}
\newcommand{\nameShort}{QP-SBGD}
\newcommand{\bm}[1]{\mathbf{#1}}
\theoremstyle{plain}
\theoremstyle{definition}
\newcommand{\bU}{\mathbf{U}}
\newcommand{\bj}{\mathbf{j}}
\newcommand{\bu}{\mathbf{u}}
\newcommand{\br}{\mathbf{r}}
\newcommand{\bb}{\mathbf{b}}
\newcommand{\bs}{\mathbf{s}}
\newcommand{\bx}{\mathbf{x}}
\newcommand{\x}{\mathbf{x}}
\newcommand{\bR}{\mathbf{R}}
\newcommand{\y}{\mathbf{y}}
\newcommand{\X}{\mathbf{X}}
\newcommand{\bPi}{\mathbf{\Pi}}
\newcommand{\J}{\mathbf{J}}
\newcommand{\by}{\mathbf{y}}
\newcommand{\z}{\mathbf{z}}
\newcommand{\bz}{\mathbf{z}}
\newcommand{\bZ}{\mathbf{Z}}
\newcommand{\bW}{\mathbf{W}}
\newcommand{\bQ}{\mathbf{Q}}
\newcommand{\cB}{\mathcal{B}} 
\newcommand{\cD}{\mathcal{D}}
\newcommand{\bv}{\mathbf{v}}
\newcommand{\Z}{\mathbf{Z}}
\newcommand{\W}{\mathbf{W}}
\newcommand{\bg}{\mathbf{g}}
\newcommand{\R}{\mathbb{R}}
\newcommand{\bOmega}{\bm{\Omega}}
\DeclareMathOperator{\sign}{sign}
\newtheorem{thm}{Theorem}
\newtheorem{remark}{Remark}
\newtheorem{prop}{Proposition}
\newtheorem{dfn}{Definition}
\newtheorem{asm}{Assumption}
\crefname{algorithm}{\text{Alg.}}{\text{Alg.}}
\crefname{assumption}{\textbf{H}}{\textbf{H}}
\crefname{asm}{\text{Assumption}}{\text{Assumption}}
\crefname{equation}{\text{Eq}}{\text{Eq}}
\crefname{definition}{\text{Dfn.}}{\text{Dfn.}}
\crefname{lemma}{\text{Lemma}}{\text{Lemma}}
\crefname{dfn}{\text{Dfn.}}{\text{Dfn.}}
\crefname{thm}{\text{Thm.}}{\text{Thm.}}
\crefname{tab}{\text{Tab.}}{\text{Tab.}}
\crefname{fig}{\text{Fig.}}{\text{Fig.}}
\crefname{table}{\text{Tab.}}{\text{Tab.}}
\crefname{figure}{\text{Fig.}}{\text{Fig.}}
\crefname{section}{\text{Sec.}}{\text{Sec.}}
\crefname{prop}{\text{Prop.}}{\text{Prop.}}
\crefname{algorithm}{\text{Alg.}}{\text{Alg.}}
\definecolor{darkorange}{rgb}{1.0, 0.45, 0.0}
\newcommand{\MK}[1]{{\color{black}#1}}
\newcommand{\revised}[1]{{\color{black}#1}}
\newcommand{\etal}{\emph{et al.}}
\newcommand{\ie}{\emph{i.e.}}
\DeclareMathOperator*{\argmin}{arg\min}
\renewcommand{\paragraph}[1]{{\vspace{1mm}\noindent \bf #1}.}
\renewcommand{\paragraph}[1]{{\vspace{1mm}\noindent \bf #1}.}
\renewenvironment{equation*}{\[}{\]\ignorespacesafterend}
\title{Projected Stochastic Gradient Descent with Quantum Annealed Binary Gradients}
\def\etal{\emph{et al}\bmvaOneDot}
\begin{document}

\maketitle

\begin{abstract}
We present {\name} (\nameShort), a novel per-layer stochastic optimiser tailored towards training neural networks with binary weights, known as \emph{binary neural networks} (BNNs), on quantum hardware. BNNs reduce the computational requirements and energy consumption of deep learning models %
with minimal loss in accuracy. However, training them in practice remains to be an open challenge. Most known BNN-optimisers either rely on projected updates or binarise weights post-training. Instead, \nameShort~approximately maps the gradient onto binary variables, by solving a \emph{quadratic constrained binary optimisation}. Moreover, we show how the $\mathcal{NP}$-hard projection can be effectively executed on an adiabatic quantum annealer.
We prove that if a fixed point exists in the binary variable space, the updates will converge to it. 
Our algorithm is implemented per layer, making it suitable for training larger networks on resource-limited quantum hardware. Through extensive evaluations, we show that \nameShort~outperforms or is on par with competitive and well-established baselines such as BinaryConnect, signSGD and ProxQuant when optimising binary neural networks.
\end{abstract}

\maketitle

\section{Introduction}

Our contemporary times witness the emergence of two exciting and prominent avenues of scientific advancement: quantum computing (QC)~\cite{divincenzo1995quantum,nielsen2002quantum} in physics and
machine learning~\cite{krizhevsky2017imagenet,zhang2020deep} in computer science. QC seeks to develop novel, efficient computing systems that can solve problems beyond the capabilities of classical computers. 

On the other hand, machine learning aims to create algorithms that can analyze and learn from data without the need for explicit programming. With these advancements in mind, it is natural to pose the question of whether quantum computers can be utilised to train our learning machines or in other words, to \emph{optimise} their parameters.

\begin{wrapfigure}{r}{0.5\textwidth}
    \includegraphics[width=\linewidth]{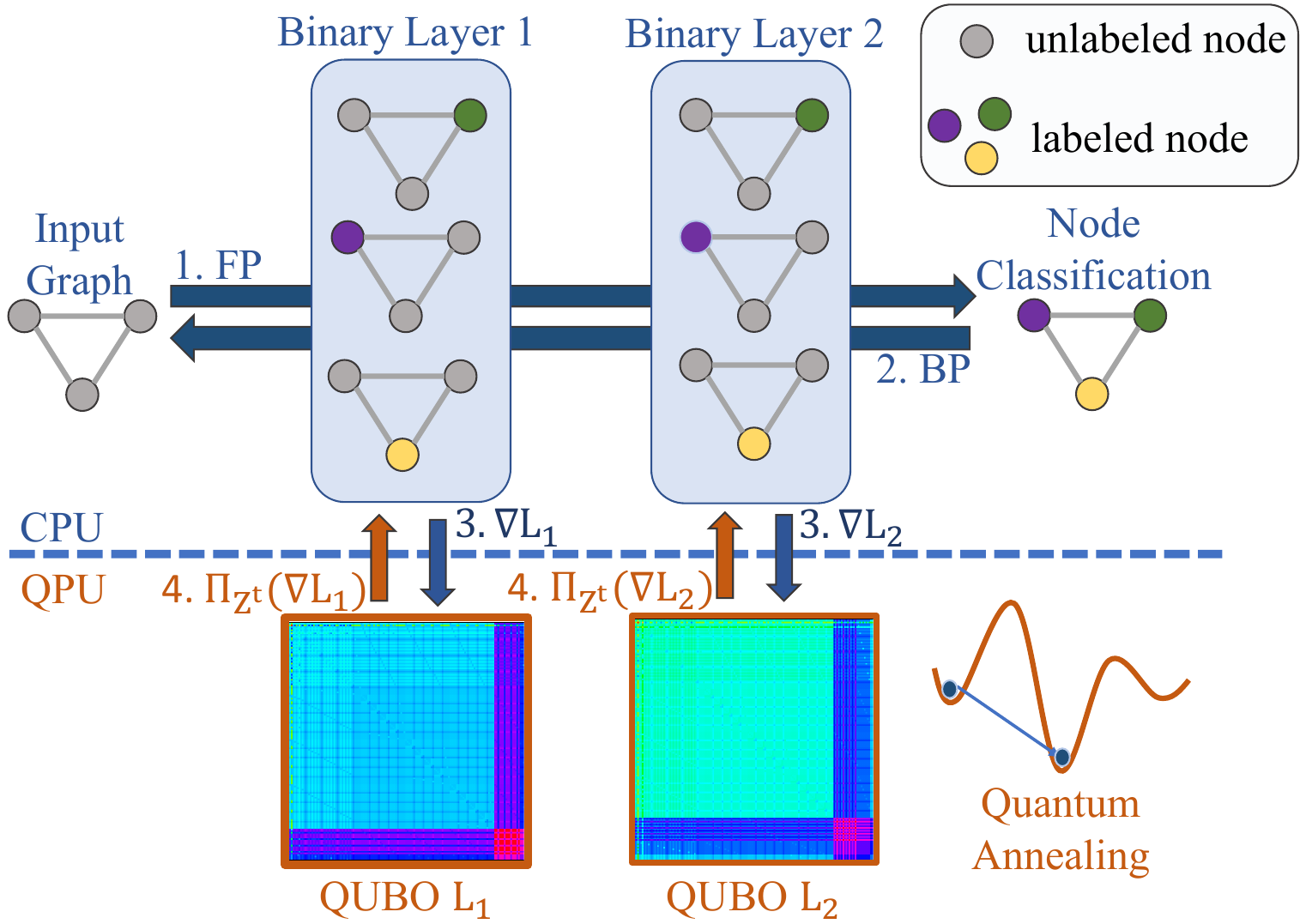}
    \vspace{0.5pt}
    \caption{\footnotesize We propose \textbf{\name} (\textbf{\nameShort}), a provably convergent, layer-wise optimizer for training binary (graph) neural networks on adiabatic quantum annealers. Our hybrid approach iteratively optimizes each hidden layer. We first apply a forward-backward pass on the neural network computing the gradients on a classical computer (steps 1--2).
    We then update the weights in binary variables for each layer separately on the actual quantum hardware of D-Wave~\cite{Boothby2020arXiv}. 
    \vspace{-10pt}
    }
    \label{fig:intro}
\end{wrapfigure}

This seemingly simple question calls for a 
reconsideration of our training algorithms, due to an important distinction between classical optimisation and the state-of-the-art quantum annealers (QA), such as the D-Wave Advantage~\cite{dwavehandbook,Boothby2020arXiv}\footnote{Quantum annealers based on the \emph{adiabatic quantum computation} have demonstrated remarkable progress in the scale of quantum computing, reaching beyond 5000 qubits, compared to the \emph{universal gate machines} such as Google Sycamore (53 qubits) and IBM Osprey (433 qubits).}. 
While classical optimisation algorithms commonly use \emph{relax-and-round} schemes to tackle discrete problems, QAs excel in solving
combinatorial, binary optimisation problems that can be expressed in the form of Ising models~\cite{lucas2014ising}. Therefore, any training algorithm looking to leverage QA should be modified to involve (sub-)problems in the form of an Ising model. For standard training algorithms, this is not the case.

In this work, we fill this gap for the case of \emph{binary neural networks} (BNNs)~\cite{qin2020binary,wang2021bi}.
Due to their binary nature, training BNNs effectively involves solving a combinatorial optimisation problem supported by real quantum hardware such as quantum annealers (QAs). Once trained, BNNs can be deployed on edge devices for efficient inference. \MK{Here, they can play a vital role in efficient sensors in AR setups. Further, BNNs can be used for real-time image processing.}
To train BNNs on quantum hardware, we introduce \revised{\name}~(\nameShort), a hybrid classical-quantum, stochastic projected-gradient descent optimiser where the gradients are approximated by their projections onto the set of binary variables by solving a Quadratic Unconstrained Binary Optimisation (QUBO) problem~\cite{lucas2014ising} on a real quantum annealer. In contrast to projection-based optimisers~\cite{qin2020binary,bai2018proxquant},~\nameShort~respects the binary nature of the parameter space and potentially delivers higher quality weight updates.

Our algorithm's most computationally expensive step is the projection onto the non-convex set of binary variables, which adheres to a QUBO form and hence can be computed on a quantum annealer.
\MK{In our experiments, we show an advantage using the QUBO binarisation over the simple sign binarisation, used in signSGD \cite{bernstein2018signsgd}.}
As quantum hardware resources are limited, closed-form, one-shot approaches such as that of Sasdelli~\etal~\cite{sasdelli2021quantum} fail to train 
even modest-sized neural networks. As a remedy, we show how to deploy~\nameShort~scalably, by computing weight updates per layer and incrementally over data batches, inspired by the recent layer-wise optimisers~\cite{you2019large}. This is illustrated in~\cref{fig:intro}.
\MK{Finally we apply our method to training binary logistic regression, binary MLPs and binary graph convolutional neural networks~\cite{kipf2016semi,wang2021bi} and evaluate on both image and graph representation learning.}
In summary, our contributions are as follows:
\begin{itemize}[itemsep=0pt,leftmargin=*,topsep=0pt]
    \item We propose~\nameShort, a novel, stochastic optimiser tailored for training binary neural networks utilising real quantum hardware.
    \item We prove that our algorithm converges to a fixed point in the binary parameter space under the assumption of the existence of such a point.%
    \item We show an equivalence of our binary projection to a specific QUBO problem, allowing us to implement our algorithm on quantum hardware.
    \item We conduct thorough evaluations for all variants of our algorithm on the actual quantum annealer of D-Wave as well as on classical simulated annealers. These experiments reveal that our method (i) can outperform classical benchmarks in both settings and (ii) can train binary neural networks in a scalable manner on quantum annealers.
\end{itemize}
To the best of our knowledge, this is the first time a general, practical binary neural network is trained on an actual adiabatic quantum computer. Our algorithm is intended to harness the current and upcoming advancements in QC technology, effectively circumventing practical hardware limitations such as those imposed by memory constraints. 
Our implementation can be found at \url{https://qpsbgd.github.io}. 
\section{Related Work}
\paragraph{Training binary neural networks}
Full-precision neural networks suffer from heavy memory consumption. This has been tackled via various approaches quantising the weights~\cite{gholami2022survey}.
An extreme form of quantisation is restricting the weights to the binary space, yielding BNNs~\cite{BNN_2016}, which
consume less memory and provide faster inference at the cost of certain information loss. Much effort has been devoted to mitigating performance degradation due to binarisation, as detailed in recent surveys~\cite{survey_2020,comprehensive_2021}.
Early, naive methods such as BinaryConnect (BC)~\cite{binaryconnect_2015}, BinaryNet~\cite{BNN_2016}, BinaryNetg~\cite{BNN_2016} or Dorefa-Net~\cite{dorefa-net_2018} aim to binarise weights, activations and during inference and training. Optimisation-based BNNs such as XNOR-Net~\cite{xnor-net_2016}, XNOR-Net++~\cite{bulat2019xnor}, Parameterised Clipping Activation (PACT)~\cite{choi2018pact}, LAB~\cite{hou2016loss}, HWGQ-Net~\cite{cai2017deep} or ReActNet~\cite{liu2020reactnet} attempt to more directly address the accuracy drop resulting from weight and activation binarisation in the naive models, while preserving the compact nature~\cite{survey_2020}. 
Later, ProxQuant \cite{bai2018proxquant} approached binarisation by using a gradual regulariser, which starts with continuous weights and during training gradually regularises those to binary ones.
We refer the reader to Qin~et al.~\cite{survey_2020} for a survey \revised{on standard optimisation techniques.}
Because quantum annealers operate on binary variables, {instead of using an approximation/binarisation scheme, we directly calculate the binary weight updates for the networks.}

\paragraph{Quantum neural networks}
As an emerging field, \emph{quantum machine learning}~\cite{biamonte2017quantum,schuld2015introduction,broughton2020tensorflow,wittek2014quantum} has shown that the training of linear regression, support vector machines and k-means clustering admit QUBO-like formulations~\cite{arthur2021qubo}. The first neural network variants trained on quantum hardware were Boltzmann Machines (BMs) as they naturally lent themselves to quantum annealing~\cite{dixit2021training,biamonte2017quantum,wiebe2014quantum,adachi2015application}. \emph{Quantum deep learning}~\cite{wiebe2014quantum,garg2020advances,kerenidis2019quantum}, \textit{i.e.,} creating quantum circuits that enhance the operations of neural networks by physically realizing them, has emerged to alleviate some of the computational limitations of classical deep learning, thanks to the efficient training algorithms~\cite{kerstin2019efficient,beer2020training}. %
This is different from our approach where we leverage QC as a computational tool to train classical architectures. %
Despite the widespread use of quantum annealers in various related domains such as \emph{quantum computer vision}~\cite{SeelbachBenkner2020,QuantumSync2021,yurtsever2022q}, training neural networks via QAs is less attended. Only recently, Sasdelli~\etal~\cite{sasdelli2021quantum} proposed a \emph{one-shot}, closed-form way to train binary neural networks but this approach is not practically applicable to even small-sized neural networks. We circumvent this problem by adopting a layerwise approach, updating each layer individually, leading to a more scalable algorithm.

\section{Method}
Our goal is to design an algorithm for training binary (graph) neural networks that is: 
\begin{enumerate}[leftmargin=*,topsep=0pt,noitemsep]
    \item \textbf{QA-friendly}: utilizing quantum hardware to update the weights while ensuring that the results obtained are accurate and reliable, through a hybrid feedback loop; 
    \item \textbf{Provably convergent}: training should result in the network reaching an 
    optimal state; 
    \item \textbf{Layer-wise compliant}: being able to train individual layers of the network independently, thereby enabling scalable training of large and complex models. 
\end{enumerate} %
To meet these criteria, we introduce a new optimiser, \name~(\nameShort), that maintains the binary nature of weights at all times by projecting the gradients (weight updates) onto the set of binary variables. 

\paragraph{Notation}
We consider a differentiable, Lipschitz continuous function $E_f(\x) := E(f(\x)) : \R^n \rightarrow \R$ defined on $f: \R^n \rightarrow \R^m$, implemented as an MLP or GCN, and the loss function $E:\R^m \rightarrow \R$, where $\y = f(\x)$ denotes the evaluation of $f$ at $\x$ and $n$ is the dimension of the input of $f$. 
Let $K_i$ denote the Lipschitz constant for the $i^{th}$ dimension of $E_f$ and $\bar{K} = \frac{1}{n} \sum_{i = 1}^nK_i$. %
While $t$ denotes the current iteration, $T$ is the number of total iterations. We define $\{\alpha_t\}_t$ as a series of learning rates, $\{ \x^t\}_t$ as the series of iterates, \ie parameters updated over iterations. %
$\tilde{\nabla}_{\x}E_f(\x)$ refers to the \emph{stochastic gradient} w.r.t. $\x$.
We further introduce the \emph{normalized Jacobian} $\bZ^t\in\R^{n\times m}$ whose $i^\mathrm{th}$ column is $\R^n\ni\bz_i^t = \bj^t_i / \| \bj_i^t\|^2 $ where $\bj_i^t = \nabla_{\x} y_i|_{\x^t}$. Note that, $\bj_i^t$ denotes the $i^\mathrm{th}$ column of the Jacobian map $\R^{n\times m}\ni\J^t:= \nabla_{\x} \y$ evaluated at $\x^t$.

\paragraph{Structure} In this section, we will present \nameShort~by introducing an binary optimiser,
which can converge to a local, fixed point\footnote{In our context, a fixed point differs from a local minimum. A given scaled binary gradient may not be sufficient for jumping out of a fixed point.
}.
Next, we will show how to express our binary mapping using a quadratic unconstrained binary optimisation (QUBO). Since our algorithm depends not on the solver but on the solution quality, we will leverage an adiabatic quantum computer (such as D-Wave), resulting in Quantum \nameShortTwo\, (\nameShort). 
In what follows, we will provide \emph{proof sketches} and leave the complete proofs to our supplementary.
We start by presenting our new binary mapping operator before defining \nameShortTwo.
\subsection{Binary map (BM)}
\begin{dfn}[Binary map (BM)] 
Let $\bU=[\bu_i]_i$ and let $\bv \in \R^m$. We define
$\bPi_{\bU}:\R^m \rightarrow \{ \pm 1\}^{n}$ to be the map
\begin{align}\label{eq:BP_paper}
\bPi_{\bU} (\bv) := \argmin_{\bg \in \{-1,1 \}^{n}} \sum_{i=1}^m||v_i - \bg^\top \bu_i||_2^2.
\end{align}
\label{def:bp}\vspace{-3mm}
\end{dfn}
In other words, \cref{eq:BP_paper} looks for a binary vector mapping each column of $\bU$ onto $\bv$ in the least error sense.
We now show that such a map can be used to map gradients onto a binary space.
\begin{prop}
\label{prop:eucledian_to_binary}
    Let $\hat{\bPi}_{\bU} :\R^m\to\R^n$ denote the \emph{relaxed} or continuous version of our projection map. Replacing $\bU$ by $\Z^t$ a normalized gradient w.r.t. $\y$ and using $\tilde{\nabla}_{\y}E_f(\x)$ as an input, $\hat{\bPi}_{\bU}$ satisfies:
    \begin{align}
\label{eq:binary_relax}
    \hat{\bPi}_{\Z^t}(\tilde{\nabla}_{\y}E_f(\x)) = \argmin_{\bb\in\R^{n}} \|\tilde{\nabla}_{\x}E_f(\x) |_{\x^t} -\bb\|_2^2.
\end{align}
\end{prop}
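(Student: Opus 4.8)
The plan is to reduce \cref{eq:binary_relax} to a chain-rule identity combined with a least-squares computation. First I would note that the right-hand side is only superficially an optimisation problem: $\argmin_{\bb\in\R^n}\|\tilde{\nabla}_{\x}E_f(\x)|_{\x^t}-\bb\|_2^2$ is an unconstrained convex quadratic in $\bb$ whose unique minimiser is $\bb^\star=\tilde{\nabla}_{\x}E_f(\x)|_{\x^t}$. Hence it suffices to establish the identity $\hat{\bPi}_{\Z^t}(\tilde{\nabla}_{\y}E_f(\x))=\tilde{\nabla}_{\x}E_f(\x)|_{\x^t}$; the ``$\argmin$'' form in the statement is kept only to mirror the binary map of \cref{eq:BP_paper}.

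Next I would unfold the definition of the relaxed map. Writing $\bv:=\tilde{\nabla}_{\y}E_f(\x)$ and recalling that the columns of $\Z^t$ are $\bz_i^t=\bj_i^t/\|\bj_i^t\|^2$ with $\bj_i^t=\nabla_{\x}y_i|_{\x^t}$, \cref{def:bp} gives $\hat{\bPi}_{\Z^t}(\bv)=\argmin_{\bg\in\R^n}\sum_{i=1}^m\big(v_i-\langle\bg,\bz_i^t\rangle\big)^2$. This objective is convex and differentiable, so $\bg$ is a global minimiser iff it solves the normal equations $\sum_{i=1}^m\big(\langle\bg,\bz_i^t\rangle-v_i\big)\,\bz_i^t=\bzero$. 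The candidate minimiser comes from the chain rule: since $E_f=E\circ f$ and $\J^t=\nabla_{\x}\y|_{\x^t}=[\bj_1^t,\dots,\bj_m^t]$, the stochastic gradients obey $\tilde{\nabla}_{\x}E_f(\x)|_{\x^t}=\J^t\,\tilde{\nabla}_{\y}E_f(\x)=\sum_{k=1}^m v_k\,\bj_k^t$. Substituting $\bg=\sum_{k}v_k\bj_k^t$ and using the normalisation $\langle\bj_k^t,\bz_i^t\rangle=\langle\bj_k^t,\bj_i^t\rangle/\|\bj_i^t\|^2$, the diagonal term $k=i$ contributes exactly $v_i$; I would then show that the remaining off-diagonal terms cancel, so each residual $\langle\bg,\bz_i^t\rangle-v_i$ vanishes, the objective attains its minimal value $0$ at this $\bg$, and therefore $\hat{\bPi}_{\Z^t}(\tilde{\nabla}_{\y}E_f(\x))=\sum_k v_k\bj_k^t=\tilde{\nabla}_{\x}E_f(\x)|_{\x^t}$.

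The crux, and the place I expect the real work to sit, is precisely the cancellation of the off-diagonal inner products $\langle\bj_i^t,\bj_k^t\rangle$ for $i\neq k$: it holds verbatim only when the per-coordinate gradients $\{\bj_i^t\}_{i=1}^m$ are mutually orthogonal, so I would either invoke the (approximate) independence/orthogonality of the output channels' gradients afforded by the layer-wise decomposition, or argue the identity one $\y$-coordinate at a time, where the cross terms are absent by construction. A second, milder subtlety is uniqueness: when $n>m$ the relaxed least-squares problem has an affine space of minimisers, so the equality in \cref{eq:binary_relax} should be read as ``$\tilde{\nabla}_{\x}E_f(\x)|_{\x^t}$ lies in, and is the distinguished zero-residual element of, the $\argmin$ set'', which is exactly what the normal-equation argument delivers.
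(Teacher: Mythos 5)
Your proposal follows essentially the same route as the paper's own proof: identify the right-hand side of \cref{eq:binary_relax} with the gradient itself, expand the relaxed least-squares objective $\sum_i (v_i - \bg^\top\bz_i^t)^2$, and use the chain rule to exhibit $\tilde{\nabla}_{\x}E_f(\x)|_{\x^t}$ as the point at which every residual vanishes. The crux you flag --- that the off-diagonal inner products $\langle\bj_i^t,\bj_k^t\rangle$, $i\neq k$, must cancel, which holds verbatim only for mutually orthogonal Jacobian columns or when the identity is argued one output coordinate at a time --- is precisely the step the paper elides: its proof writes the chain rule as $\frac{\partial E}{\partial \x}=\frac{\partial E}{\partial y_i}\cdot\frac{\partial y_i}{\partial \x}$ separately for each $i$, with no sum over the other output coordinates, which is exactly the per-coordinate reading you describe rather than a resolution of the cross terms. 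Your normal-equations framing and your remark that for $n>m$ the relaxed $\argmin$ is an affine set of which the gradient is only the distinguished zero-residual element are, if anything, more careful than the published argument.
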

Intuitively, $\hat{\bPi}_{\Z^t}$ acts as a Jacobian, transforming the tangent plane of the gradient {with respect} to $\y$ onto the tangent plane of the gradient {with respect} to $\x$. 
However, our original operator $\bPi_{\Z^t}$ projects onto the binary numbers and not the reals. This non-convex map only approximates $\hat{\bPi}_{\Z^t}$. Hence:
\begin{align}
    \bPi_{\bZ^t}(\tilde{\nabla}_{\y}E_f |_{\x^t}) \approx \argmin_{\bb\in\{-1,1\}^{n}} \|\tilde{\nabla}_{\x}E_f |_{\x^t} -\bb\|_2^2.\vspace{-2mm} 
\end{align}
Our supplementary material shows that our convergence depends not on the quality of this approximation but rather on the direction of the binary projection and the real gradient.

Note that, usually $m\leq n$, leading to~\cref{eq:BP_paper} being under-determined.
Yet, in practice, as we operate on the confined space of binary variables we get valid solutions, one of which can approximate the gradient w.r.t. the input. %
With this motivation, we use $\bPi_{\Z^t}(\tilde{\nabla}_{\y}E_f(\x))$ as an approximation for the true stochastic gradient with respect to $\x$ and introduce our new iterative optimiser, \MK{\nametwo~(\nameShortTwo). First, we introduce the update rule, and in \cref{sec:training_per_layer}, we summarize the algorithm as a whole.}

\subsection{\nametwo~(\nameShortTwo)}
\begin{dfn}[\nameShortTwo]
\label{def:binary_spgd}
\MK{The iterative optimiser admits the following update rule, where we evaluate the gradients on the variables restricted to $\{ \pm 1\}^n$, $\hat{\x}^t =\sign(\x^t)$:}
\begin{align}\label{eq:BAPGDupdate}
    \x^{t+1}=\x^t-\alpha_t \bPi_{\Z^t}(\widetilde{\nabla}_{\y} E_f\left(\hat{\x}^t\right)).
\end{align}
\end{dfn}
\begin{remark}
    This approach bears similarities to a variety of algorithms in the literature. First,~\cref{eq:BAPGDupdate} resembles ProxQuant~\cite{bai2018proxquant} with the critical distinction that while ProxQuant employs the continuous gradient, our approach utilises the binarised gradient directly. Second, by drawing connections to \emph{manifold optimisation}~\cite{boumal2023introduction}, our projection parallels a \emph{Euclidean-to-Riemannian gradient} operator whereas the projected update mirrors an approximate \emph{retraction}. Yet, this is merely an analogy as we operate in a discrete space.
\end{remark}

We now show that if a fixed point exists in the binary set of variables, \nameShortTwo~algorithm can converge to such a point.
\begin{asm}[Divergent learning rates]
The learning rates $\alpha_t$ do not converge, hence $\sum_t \alpha_t = \infty$. Otherwise, we could directly assume the convergence of the algorithm.
\label{ass:LR}
\end{asm}

\begin{thm}[Fixed point of \nameShortTwo]
   Under \cref{ass:LR}, 
   $\mathbf{s} \in \{\pm 1\}^n$ is a fixed point
   for \cref{def:binary_spgd} if and only if $\sign (\bPi_{\mathbf{s}} (\nabla_{\mathbf{y}}E_f(\mathbf{s}))_i) = - \mathbf{s}_i$. This point might not exist, in which case \nameShortTwo~does not converge.
\end{thm}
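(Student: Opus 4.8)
The plan is to study the update \cref{eq:BAPGDupdate} coordinate-wise on the event that the sign pattern has already locked onto $\mathbf{s}$. First I would note that as long as $\hat{\x}^t=\sign(\x^t)=\mathbf{s}$, every ingredient of the projection --- the binarised (stochastic) gradient $\widetilde{\nabla}_\y E_f(\hat{\x}^t)$ and the normalised Jacobian $\Z^t$ defining $\bPi_{\Z^t}$ --- is evaluated at $\mathbf{s}$, so (reading the theorem with the full gradient, as it is written with $\nabla$ rather than $\widetilde{\nabla}$) the step is the fixed binary vector $\bg:=\bPi_{\mathbf{s}}(\nabla_\y E_f(\mathbf{s}))\in\{\pm1\}^n$ and \cref{eq:BAPGDupdate} collapses to $\x^{t+1}=\x^t-\alpha_t\bg$. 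Multiplying coordinate $i$ by $s_i$ and using $\sign(x^t_i)=s_i$ gives the single identity the whole argument rests on: $s_i x^{t+1}_i=|x^t_i|-\alpha_t\, s_i g_i$, with $s_i g_i\in\{\pm1\}$.

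For the ``if'' direction, assume $\sign(g_i)=-s_i$, i.e. $s_ig_i=-1$, for every $i$. Then $s_ix^{t+1}_i=|x^t_i|+\alpha_t>0$, so $\sign(x^{t+1}_i)=s_i$ and moreover $|x^{t+1}_i|$ strictly increases; applying this at every coordinate and iterating yields $\sign(\x^{t'})=\mathbf{s}$ for all $t'\ge t$, which is exactly the statement that $\mathbf{s}$ is a fixed point of \cref{def:binary_spgd}. For the ``only if'' direction I would argue the contrapositive: if $s_ig_i=1$ for some coordinate $i$, then as long as the sign pattern is still $\mathbf{s}$ (so that $\bg$ is unchanged) the iterates satisfy $s_i x^{t+k}_i=|x^t_i|-\sum_{j=0}^{k-1}\alpha_{t+j}$; by \cref{ass:LR} the partial sums $\sum_j\alpha_{t+j}$ diverge, so this quantity becomes non-positive after finitely many steps and $\sign(x^{t+k}_i)\ne s_i$, i.e. the trajectory has left the cell $\{\x:\sign(\x)=\mathbf{s}\}$ and $\mathbf{s}$ is not a fixed point. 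Combining the two directions, and using $\sign(\bg)=\bg$ for $\bg\in\{\pm1\}^n$, gives the claimed equivalence.

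For the last sentence, note that the fixed-point condition is equivalently $\bPi_{\mathbf{s}}(\nabla_\y E_f(\mathbf{s}))=-\mathbf{s}$, i.e. the argmin in \cref{eq:BP_paper} must hit the particular candidate $-\mathbf{s}$, which need not happen for any $\mathbf{s}$; a one-dimensional instance such as $E_f(x)=x^2$ (where the binarised gradient at $\sign(x)=\pm1$ equals $\pm1$ and thus points back toward, not away from, $\mathbf{s}$) already has an empty fixed-point set. When this is the case the ``only if'' argument applies to every $\mathbf{s}\in\{\pm1\}^n$, so from any starting iterate some coordinate's sign flips after finitely many steps, $\sign(\x^t)$ never stabilises, and \nameShortTwo~does not converge in the binary parameter space.

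The step I expect to be the main obstacle is making precise --- and defending --- the notion of ``fixed point'' and of ``convergence'' for a dynamical system whose state is the real vector $\x^t$ but whose only stable object is the sign cell containing it: the ``if'' part must phrase stability as invariance of the cell (not of $\x^t$ itself, since $|x^t_i|$ in fact grows without bound there), while the ``only if'' and non-convergence parts genuinely need \cref{ass:LR} to prevent a coordinate from hovering near zero forever, and one must keep in mind that once a coordinate flips, $\bg$ changes and the iterate merely enters a different cell rather than exhibiting a clean global oscillation. A secondary subtlety is the gap between the stochastic gradient in \cref{def:binary_spgd} and the deterministic one in the theorem, which I would bridge either by restricting to the full-gradient regime or by carrying an expectation / per-realisation argument through the coordinate-wise identity.
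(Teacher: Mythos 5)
Your proposal is correct and follows essentially the same route as the paper's proof (which itself adapts Prop.~5.3 of Bai \emph{et al.}): in both, the iterate within a fixed sign cell reduces to $\x^{t}=\x^{0}-\bg\sum_{j}\alpha_{j}$ with constant binary step $\bg=\bPi_{\mathbf{s}}(\nabla_{\y}E_f(\mathbf{s}))$, the divergence of $\sum_t\alpha_t$ forces the asymptotic sign to equal $-\bg$ (giving ``only if''), and the ``if'' direction is the straight-line/cell-invariance argument. Your coordinate-wise identity $s_i x^{t+1}_i=|x^t_i|-\alpha_t s_i g_i$ and your explicit remarks on the stochastic-vs-deterministic gradient and on what ``fixed point'' means for the sign cell are just a more careful rendering of the same argument.
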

\begin{proof}
The proof follows analogously to the proof of Prop.~5.3 in Bai \textit{et al.}~\cite{bai2018proxquant}. 
\end{proof}

\vspace{-1mm}
\subsection{\name~(\nameShort)}\vspace{-1mm}
Finding an ideal binary solution for the optimisation problem stated in \cref{def:bp} is an $\mathcal{NP}$-hard problem.
Therefore, we opt for a Quantum Annealer to effectively solve this problem.
To deploy the binary-projection problem onto a QA, we %
{first construct} the corresponding, equivalent Quadratic Unconstrained Binary Optimisation (QUBO) problem:

\begin{prop}[BM as QUBO]\label{prop:ising_for_bp_paper}
The binary map $\bPi_{\bU}(\bv)$ in \cref{def:bp} admits the following \emph{Ising Model} or quadratic unconstrained binary optimisation (QUBO) form:
\begin{align}
    \bPi_{\bU} (\bv) = \argmin_{\bg \in \{-1,1 \}^{n}} \bg^\top\sum_{i=1}^m\bQ_i\bg + \bs^\top\bg, \text{ where }\bs=-2\sum\limits_{i=1}^m v_i\bu_i^t,\quad \bQ_i={\bu_i^t}{\bu_i^t}^\top \quad\mathrm{and}\quad \bQ=\sum\limits_{i=1}^m\bQ_i.\nonumber   
\end{align}

\end{prop}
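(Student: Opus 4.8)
The plan is to expand the objective of \cref{def:bp} directly and collect it by degree in the binary variable $\bg$. First I would observe that, for each $i$, both $v_i\in\R$ and $\bg^\top\bu_i\in\R$ are scalars, so $\|v_i-\bg^\top\bu_i\|_2^2=(v_i-\bg^\top\bu_i)^2=v_i^2-2v_i\,\bg^\top\bu_i+(\bg^\top\bu_i)^2$. The middle term is linear in $\bg$, namely $-2v_i\,\bg^\top\bu_i=(-2v_i\bu_i)^\top\bg$, while the last term factors as $(\bg^\top\bu_i)^2=(\bg^\top\bu_i)(\bu_i^\top\bg)=\bg^\top(\bu_i\bu_i^\top)\bg=\bg^\top\bQ_i\bg$, which is the quadratic part.

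Summing over $i=1,\dots,m$ and grouping then gives
\begin{align*}
\sum_{i=1}^m\|v_i-\bg^\top\bu_i\|_2^2
&= \bg^\top\Big(\textstyle\sum_{i=1}^m\bQ_i\Big)\bg + \Big(-2\textstyle\sum_{i=1}^m v_i\bu_i\Big)^{\!\top}\bg + \textstyle\sum_{i=1}^m v_i^2 \\
&= \bg^\top\bQ\,\bg+\bs^\top\bg+\mathrm{const},
\end{align*}
with $\bQ=\sum_i\bQ_i$ and $\bs=-2\sum_i v_i\bu_i$ exactly as in the statement. Since $\sum_i v_i^2$ does not depend on $\bg$, dropping it leaves the minimiser unchanged, so $\bPi_{\bU}(\bv)=\argmin_{\bg\in\{-1,1\}^n}\sum_i\|v_i-\bg^\top\bu_i\|_2^2=\argmin_{\bg\in\{-1,1\}^n}\big(\bg^\top\bQ\bg+\bs^\top\bg\big)$, which is the asserted Ising/QUBO form.

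If one wants to be strict about the terminology, I would append a short remark that for $\bg\in\{\pm1\}^n$ one has $g_j^2=1$, so the diagonal of $\bg^\top\bQ\bg$ contributes the constant $\trace(\bQ)$ that can likewise be absorbed, and the usual affine change of variables $\bg=2\bx-\mathbf{1}$ with $\bx\in\{0,1\}^n$ converts the quadratic-plus-linear expression into a bona fide QUBO with an identical set of optimisers.

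Honestly, there is no genuine obstacle here: the proposition is a bookkeeping identity obtained by expanding a sum of squared scalars. The only points requiring care are (i) keeping track of which objects are scalars versus vectors/matrices when manipulating $\bg^\top\bu_i$, and (ii) discarding the $\bg$-independent constant before taking the $\argmin$; the superscript $t$ on $\bu_i^t$ in the statement is merely the iteration index and carries no extra content.
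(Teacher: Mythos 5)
Your proposal is correct and follows essentially the same route as the paper's own proof: expand each squared scalar $\bigl(v_i-\bg^\top\bu_i\bigr)^2$, collect the quadratic part into $\bg^\top\bQ_i\bg$ and the linear part into $\bs^\top\bg$, and drop the $\bg$-independent constant $\sum_i v_i^2$ before taking the $\argmin$. Your added remarks on absorbing the diagonal (since $g_j^2=1$) and on the $\{0,1\}$-variable change are harmless extras the paper omits, but the core argument is identical.
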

\begin{proof}[Sketch of Proof]
The derivation follows from the expansion of the Euclidean norm.
\end{proof}
\cref{prop:ising_for_bp_paper} allows us to compute the costly projection step on a quantum annealer such as D-Wave~\cite{dwavepegasus,dwavehandbook}.
To compute the projection for a batch of data, we are using the multi-objective optimisation of~\cite{ayodele2022multi}. With that, we define the quantum versions of our optimisers as follows:
\begin{dfn}[\nameShort]
Whenever we compute the projection by solving the QUBO given in~\cref{prop:ising_for_bp_paper}, we will use the prefix \emph{Q-}, to denote the \emph{quantum}-implementable variant of \nameShortTwo.
\end{dfn}

\subsection{{Training Per Layer}}
\label{sec:training_per_layer}
\begin{wrapfigure}[15]{L}{0.6\textwidth}
\vspace{-15pt}
 \begin{minipage}{0.6   \textwidth}
\begin{algorithm}[H]
\footnotesize
\begin{algorithmic}[1]
\Require Training data $\cD = \{ (\bx_i, \hat{y}_i) \}^{D}_{i=1}$, batch size $B$, learning rate $\alpha$, real initial weights $\{ {\mathbf{\Omega}}^{\ell} \}_{\ell = 0}^{L-1} $
\For{ $t \in [1, \dots, T]$}
 \State $\{ \bW^{\ell} \}_{\ell = 1}^{L} \leftarrow \sign(\bOmega^\ell)$ 
\State Sample a batch index set $\cB \subset \{1, \dots, D\}$.
\State $ \by_{\cB} \leftarrow$ Feedforward pass of $\x_\cB$ %
\State $\{ \dot{\br}^{\ell}_{i,\cB} \}_{\ell = 1}^L \leftarrow$ Compute
intermediate gradients with \cref{eq:pseudo_label}
\For{$\ell = 1,\dots,L$}
\State ${\dot{\bW}}^\ell \leftarrow [\bPi_{\Z^{t,\ell}_{\mathcal{B}, i}}(\dot{\br}^\ell_{i,\mathcal{B}})]_{i = 1}^m$ by solving~the QUBO  \Statex\hskip4.5em defined in~\cref{prop:ising_for_bp_paper}. %
\State ${\mathbf{\Omega}}^{\ell} \leftarrow {\mathbf{\Omega}}^{\ell} - \alpha {\dot{\bW}}^\ell$
\EndFor
\EndFor
\end{algorithmic}
\caption{\footnotesize \name~(\nameShort) }
\label{alg:main_paper}
\end{algorithm}
 \end{minipage}
  \end{wrapfigure}
We now apply \nameShort~layer-wise to train binary \MK{MLPs} -- summarised in~\cref{alg:main_paper}.
We start by considering a general binary neural network where $\R^{m\times n}\ni\X:=\{\x_i\}_{i\leq n}$ denotes the input features and $\X^\ell$ 
is 
the input of the $\ell^{\text{th}}$ layer.
We denote the real-valued weight matrices as
$\{ \mathbf{\Omega}^\ell\}_{\ell=1}^L$,  where $\mathbf{\Omega}^\ell \in \R^{n_{\ell}\times m_{\ell}}$. Similarly, $\W^{\ell} := \sign \mathbf{\Omega}^{\ell}$ denote layer-wise binary weights. 
We first show how to update the weights of each layer, individually. 
We further update each column $j$ of the weight matrix individually.

\begin{dfn}[Intermediate gradient]
\label{def:dotr}
        Let $\cB \subset [1\dots D]$ denote the index for a batch of input vectors. We denote the output vectors of the $\ell^{\mathrm{th}}$ layer corresponding to the batch $\cB$ as $\br_\cB^{\ell}$ and define:
\begin{align}
    \bR^{\ell}_\cB = ((\mathsf{h} \cdots \; \mathsf{h}({\X}\W^{1}) \cdots {\W}^{\ell-1} ){\mathbf{\Omega}}^{\ell})_\cB.
    \label{eq:pseudo_label}
\end{align}
where the binary weights $\W^{\ell} := \sign \mathbf{\Omega}^{\ell}$ pass the information through the non-linear activation function $\mathsf{h}(\X) =\mathsf{hardTanh}(\X)$~\cite{collobert2011natural}.
This is the straight-through estimator (STE), also used in \cite{STE, STE2_2013}.
 The intermediate gradient $\dot{\bR}$ is then defined as:
 $\dot{\bR}^{\ell}_\cB = \frac{\partial E}{\partial \bR^{\ell}_\cB}$, whose $i^{th}$ row is $\br^\ell_{i,\cB}$. 
\end{dfn}

\section{Experiments}
\label{sec:exp}
In this section, we investigate how well \nameShort~performs on various problem types. 
To do so, we gradually assess the developed optimisers before evaluating \nameShort.
\MK{We start by assessing the convergence of the optimiser} in the binary logistic regression problem.
evaluate the accuracy as well as the convergence of \nameShort~on binary MLPs.
Subsequently, we train and test binary graph-convolutional networks (GCNs) on three different datasets. %
{For our method we are either using the real quantum annealer developed by D-Wave or the well-engineered simulated annealing (SA) software, Gurobi~\cite{gurobi}. Using D-Wave's QA~\cite{dwavehandbook} is marked with (D) and Gurobi's SA with (G)}. 

\paragraph{Baselines}
As this is the \emph{first} quantum deployable optimiser for neural-network training, we compare our algorithm with the binary optimisers deployable on classical hardware. \MK{All chosen baselines also use the principle of layerwise training. Those baselines are, like our proposed method, also general optimisers and not architecture-specific. 
} In particular we consider (ii) \textbf{BinaryConnect} (BC (SGD))~\cite{binaryconnect_2015}, (iii) \textbf{BinaryConnect (signSGD)} (BC (signSGD)), 
a variance of BinaryConnect \cite{binaryconnect_2015}, where the signSGD optimiser is used instead of the proposed SGD to train binary neural networks, and finally (iv) \textbf{ProxQuant}~\cite{bai2018proxquant}.

\subsection{Training binary logistic regression}\vspace{-1mm}

\begin{wrapfigure}[10]{r}{.4\textwidth}
    \centering
    \vspace{-10pt}
    \hspace{-9pt}
    \includegraphics[width=1.05\linewidth]{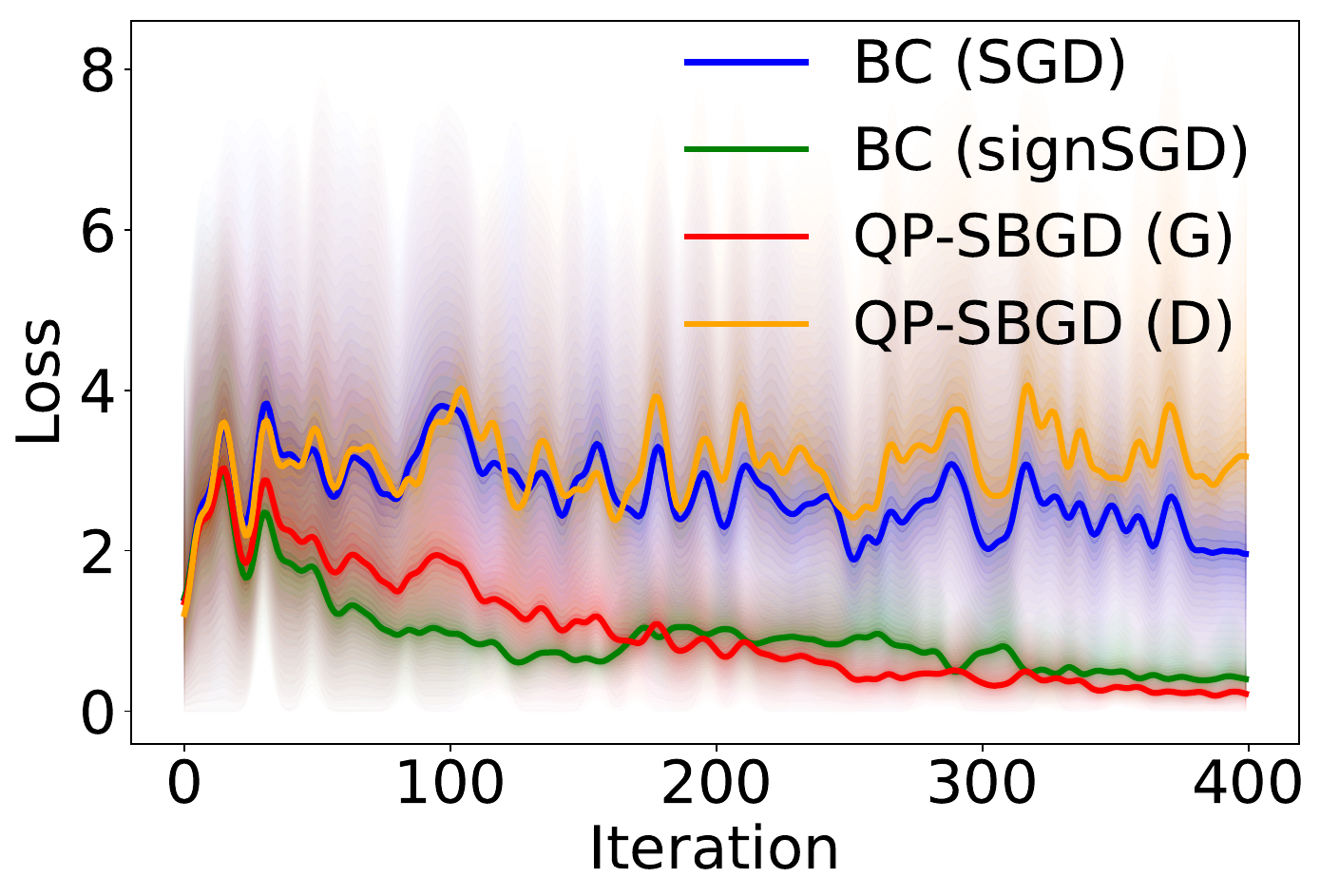}
    \caption{\footnotesize Binary logistic regression.}
    \label{fig:log_regression}
\end{wrapfigure}
Before leaping to neural networks, we evaluate the performance of our algorithm on the well-studied \emph{binary logistic regression}, a logistic regression model where the target variable is binary.
We synthesize a toy dataset to show empirical convergence. First, we sample two, two-dimensional linearly separable blobs, label them with $0$ and $1$, accordingly and lift them into a $3D$ space by appending the $\mathbf{1}$ intercept. %
Next, on this data, we fit the logistic regression model using our algorithm as well as the baselines.
Our model consists of a binary linear layer with an input dimension of $3$ and an output dimension of $1$, followed by a 
\emph{sigmoid} activation function to perform binary classification. It is trained with a binary cross entropy (BCE) loss. %
We use the following learning rates:
$0.05$ for \nameShort~(Gurobi) and BC (signSGD), $0.01$ for \nameShort~(D-Wave) and $5{\cdot}10^{-5}$ for BC (SGD).
In \cref{fig:log_regression}, we observe that after the initial \emph{fitting} phase, our optimizer is consistently the best-performing one when the binary projection is computed via Gurobi. 
This indicates two notable points: (i) our algorithm benefits from solving a QUBO to find the best fitting binary gradient approximation, compared to using $\sign(\cdot)$, and (ii) we are dependent on the quality of projections, \ie, as D-Wave solution has lower quality, our convergence is slower.

\subsection{Training of binary MLPs}

We now present training of binary MLPs for binary classification on two datasets: UCI Adult \cite{platt1998sequential} and MNIST \cite{deng2012mnist}. We experimentally validate the effectiveness of our approach in training binary MLPs by (i) demonstrating that \nameShort~converges and (ii)  \MK{showing accuracies of MLPs trained with different algorithms.} 
\subsubsection{Training on UCI Adult}

The UCI Adult~\cite{platt1998sequential} is a binary classification benchmark containing $123$ binary features. \revised{We use} $1,605$ labelled training data and $30,956$ testing data, \revised{which are selected randomly.} We now analyze the convergence and training accuracy on certain subsets of this dataset. Models in this setup are trained with the BCE loss.

\paragraph{Convergence}
\begin{wrapfigure}[9]{r}{0.5\textwidth}
    \centering
    \vspace{-25pt}
    \includegraphics[width=\linewidth]{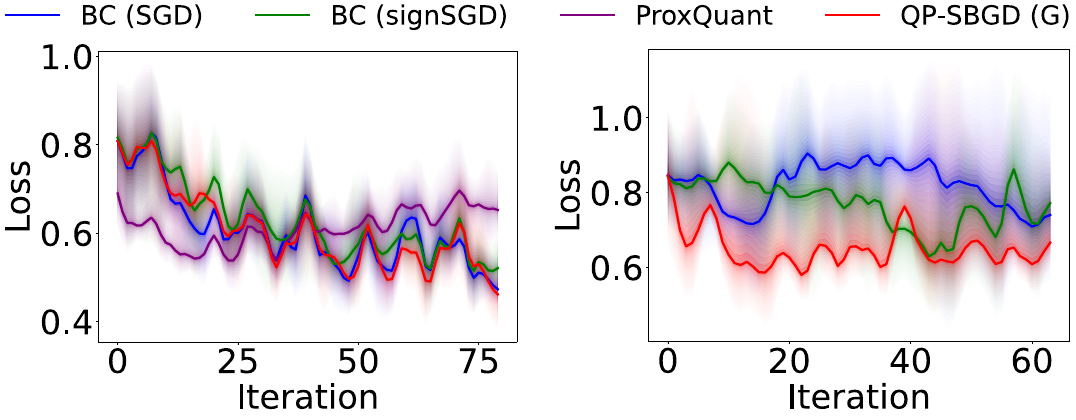}
    \vspace{0.5pt}
    \caption{\footnotesize Mean loss over five runs. \emph{Left:} two- layer setup; \emph{Right:} 10-layer setup. We compare \nameShort~against ProxQuant and BinaryConnect (BC) with signSGD and SGD optimisers. %
    }
    \label{fig:adult_convergance_paper}
  \end{wrapfigure}
We now investigate the convergence of (i) a two-layer BNN with $15$ input dimensions (randomly selected out of $123$ features), $10$ hidden and $1$ output dimension, and (ii) a $10$-layer setup with $123$ input features, a hidden dimension of $128$ and an output dimension of $1$. In both cases, we use 32 batches with a batch size of 16.
~\cref{fig:adult_convergance_paper} demonstrates the convergence of our algorithm in comparison to our baselines. As the training is dominated by noise, we show an average of over five runs here.
The convergence of \nameShort~is similar to BC. Yet, while ProxQuant initially converges faster in the two-layer setup (due to the real weights that gradually convert to binary), it fails to train in the ten-layer setup. In the two-layer setup, BC (SGD) seems to have a slight convergence advantage. However, in the ten-layer network, we observe an advantage of \nameShort~ over the BC versions. While the theoretical convergence rates are identical, ours is the only algorithm with the ability to leverage QCs.

\vspace{-2mm}\subsubsection{Training on MNIST}\vspace{-1mm}
\label{sec:MNIStPaper}

\begin{wraptable}[7]{l}{0.6\textwidth}%
\vspace{-6pt}
\begin{adjustbox}{max width=\linewidth}
\setlength\tabcolsep{2mm}
    \begin{tabular}{lccccc}
    & ProxQuant& BC & BC &\textbf{\nameShort} & \textbf{\nameShort}\\
      &  &  SGD  & signSGD & (Gurobi) & (D-Wave) \\
      \hline
    0/2  & 0.65 &0.64 & \textbf{0.71} & \underline{0.66}  &  0.62\\
    \hline
    1/2 &0.67 & \underline{0.72} & 0.66 & \textbf{0.73} & 0.70\\
    \hline
    1/7  &0.64 &0.74 & 0.68 & \textbf{0.75} & \underline{0.74} \\
    \hline
    \end{tabular}
    \end{adjustbox}
    \vspace{4pt}
  \caption{\footnotesize The accuracy of binary classification on MNIST. The first column contains the digits used in the experiment. \label{tab:MNISTclassification2}}%

\end{wraptable}

We now turn to a famous problem of digit classification on MNIST dataset~\cite{lecun-mnisthandwrittendigit-2010}. 
To ensure that our features are compact and binary, we extract handcrafted features of MNIST letters as follows: 
(i) we extract key points via Monti~\etal~\cite{monti2017geometric}, (ii) we sample 16 lines, which all run through the centroid of the image %
and (iii) for each line, we extract a descriptor by counting the number of key points falling in the positive and negative half-plane. If there are more key points on the positive half-plane of line $i$ we set $s_i=1$ and otherwise to $s_i=-1$. For a given image, this amounts to a feature vector $\mathbf{s}\in\{\pm 1\}^{16}$.
In total, we use a \revised{randomly sampled} train/test split of $500/3000$ images. 
We then adopt a three-layer MLP, with an input dimension of $16$, a hidden dimension of $4$, and an output of $2$ to binary-classify the digits of $0$, $1$, $2$ and $7$ into two classes. {We choose those digits as they are the easiest to distinguish after applying the binarisation.} 
{To train our networks on the resource-limited D-Wave QA, we use a small training sample size of $500$. To have a more representative evaluation we opted for more testing samples.}
The accuracies attained are reported in~\cref{tab:MNISTclassification2}. 
In this experiment, \nameShort~(D) shows to be on par with the state of the art, while our \nameShort~(G) can achieve the best accuracy. With the advancements in quantum hardware, we expect \nameShort~(D) to surpass \nameShort~(G).

\MK{\paragraph{Variance of the runs}
We observed significant variance in individual training runs due to two main factors: 1) the use of binary network weights and 2) the employment of small networks. \revised{Binary weights result in pronounced weight changes during training. Considering also the small network sizes, each weight update significantly influences the output under these circumstances,} amplifying overall variance across training runs. 
}

\begin{figure*}
    \centering
    \includegraphics[width = \textwidth]{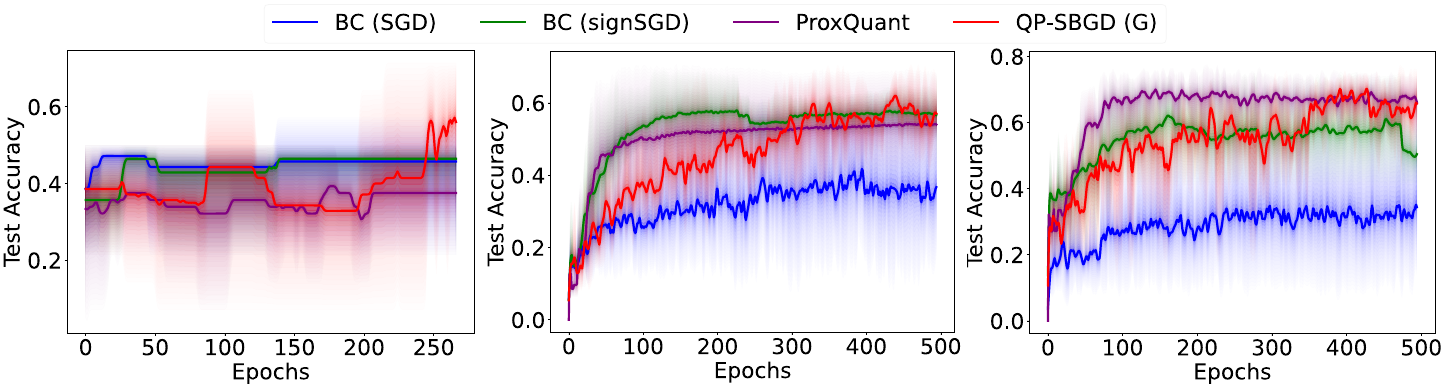}
    \vspace{1pt}
    \caption{\footnotesize \textbf{Graph classification}: We report mean test accuracy over five runs for Karate club~\cite{zachary1977information} (left), Cora~\cite{mccallum2000automating} (middle) and Pubmed~\cite{namata2012query} (right) datasets.\vspace{-4mm}
    }
    \label{fig:acc_on_graphs}
\end{figure*} 

\subsection{Training of binary GCNNs}\vspace{-1mm}

Finally, we present how our algorithm performs on graph data.
We start with tests on the small-scale example of the Karate club social network graph \cite{zachary1977information}. %
In addition, we report the performance of our method on the citation graphs of Cora~\cite{mccallum2000automating} and Pubmed~\cite{namata2012query} benchmarks. %
We start by providing our experimental configuration.
\vspace{-2mm}
\subsubsection{Datasets and Implementation Details}
\vspace{-1mm}
\paragraph{Karate club}
We are using the Karate club social network graph \cite{zachary1977information} with handcrafted features.
The feature vectors one-hot encode the class and thus form an identity matrix.
We reduce those vectors by changing the one hot encoding to a binary encoding, \emph{e.g.}, $0010 \rightarrow 10$.
This reduces the dimensions from $34$ down to $6$.
\MK{As a train/test split we use (for balancing reasons) five nodes from each of the four categories and reserve the remaining 14 for testing.}

\paragraph{Cora and Pubmed}
We use training/validation/test splits in accordance with Yang~\etal
~\cite{yang2016revisiting}: 140/500/1000 for Cora and 60/500/1000 for Pubmed, where the remaining nodes are treated as unlabelled data. 
During training, we use minibatches comprising $1/16^{\mathrm{th}}$ of the training data, from which we draw a single sample per epoch. To reduce the dimensionality of the convolution layers (to $10$ in this case), we use a two-layer MLP. This \emph{encoder}-MLP is in all cases trained with Adam. Nevertheless, the resulting layer-wise QUBO problem, even in the low dimensions, is too large for D-Wave Advantage 6.1. Hence, we use the Gurobi solver to demonstrate the efficacy of~\nameShort. 
On all three setups, we employ the NLL loss.

\vspace{-2mm}
\subsubsection{Results and Discussion} \vspace{-2mm}
Our results, plotted in~\cref{fig:acc_on_graphs}, show that on all three datasets \nameShort~(G) outperforms the baselines on the average.
\revised{Due to the hardware limitations, we employ rather small networks, inherently leading to a higher variance. Hence, we only consider the averages over multiple runs.}
SGD is already known to be a suboptimal optimiser when learning on graphs~\cite{izadi2020optimization}. 
We see that BC also inherits this property of SGD when trained with SGD.
The optimality of our weight update scheme, therefore, improves the effectiveness of training significantly compared to the BC (SGD). 
While BC (signSGD) and ProxQuant exhibit faster convergence compared to \nameShort, our approach achieves superior overall accuracy across all datasets. This once again underscores the inherent advantage of employing a QUBO framework for attaining optimal binary weight updates.
Moreover, the remaining techniques lack quantum deployability, making \nameShort~the pioneering QA-executable algorithm, as far as our knowledge extends. While it is foreseeable that our algorithm encounters limitations imposed either by classical QUBO solvers or quantum hardware, it consistently outperforms classical specialised algorithms such as BinaryConnect and ProxQuant. Further, we expect our algorithm to inherit the advances in quantum computing and naturally improve over time.

\section{Conclusion}%

We introduced \nameShort, a scalable hybrid quantum-classical optimiser designed for incremental training of binary neural networks on real quantum hardware. By capitalising on the nature of the optimisation problem associated with incremental binary weight update, we have developed a projection procedure that admits a QUBO form that is effectively solvable via QA. {Under practically acceptable assumptions} we have also provided convergence proofs validating the reliability of our algorithm. Our results demonstrate that \nameShort~achieves on-par or superior performance compared to existing state-of-the-art while remaining adaptable to the advances in quantum computation. 
This further reveals that rethinking our algorithms to be quantum-deployable, brings benefits already for current classical algorithms at hand.

\revised{
\paragraph{Resources} 
The computational cost of our algorithm can be split into two distinct parts: First, the classical SGD-like optimisation and, second, the binarisation of the gradients. For the SGD part, we utilise the backpropagation algorithm that can easily be accelerated with GPUs. It is interesting to see how the QUBO formulation scales with layer width and batch size, i.e.~it scales quadratically in terms of batch size and layer width.}

\paragraph{Limitations}
The problems we can solve on quantum annealers are size-limited by the current capacity of the quantum processing units. 
Nevertheless, we were able to optimise the network weights in 
all our experiments. 
Moreover, due to the rapid improvements in quantum hardware, we are optimistic that our algorithm can impact practical applications in the near future. 
At times, we have found the classical SA solutions to be superior to QAs. We expect this temporary advantage to flip in favour of QAs with improvements in QA hardware.

\paragraph{Future work}
Promising avenues involve (i) QUBOs with larger spectral gaps enhancing the D-Wave solution, (ii) training networks with multiple bits per weight 

\paragraph{Acknowledgements}
This work was supported by the AWS Cloud Credit for Research program. 
The authors acknowledge the support from the Deutsche Forschungsgemeinschaft (DFG, German Research Foundation), project number 534951134. 
Tolga Birdal and Vladislav Golyanik
acknowledge computing resource support from D-Wave.

{\small
\bibliography{biliography}
}

\section*{Appendix}

{This appendix supplements our paper with the additional material referred to in the main text. This includes the proofs of the propositions (specifically the usage of the binary map as gradient approximation, the convergence and P-SBGD, and the QUBO derivation), supporting evidence of the directionality and additional experimental evaluations, further experiments on the UCI adult dataset, MNIST and analysis of the D-Wave QUBO problems. In addition, we furnish extra information on related work, implementation specifics, and function definitions.}

\section{Theoretical Results}
In \cref{tab:notations} we list the most important notations for the paper and appendix.
\begin{table}[H]
    \centering
    \begin{tabular}{rl}
       Notation  & Explanation\\
       \hline
$f$ & $ \R^n \rightarrow \R^m$any function (neural networks)\\
$n$ & dimension of the input of $f$ \\
$m$ & dimension of the output of $f$ \\
$\y$ & evaluation of $f$ at $\x$; $\y= f(\x)$ \\
 $E(\y)$ & $ \R^n \rightarrow \R$ differentiable, Lipschitz continuous function \\
$ E_f(\x)$ & $\R^n \rightarrow \R$ loss function; $ E_f(\x) := E(f(\x))$ \\
 $K_i$ & Lipschitz constant for the $i^{th}$ dimension of $E_f$ \\ 
       $\bar{K} $ & $ \frac{1}{n} \sum_{i = 1}^nK_i$ \\
       $t$ & current iteration \\
       $T$ & number of total iterations \\
 $\{\alpha_t\}_t$ &  series of learning rates, \\
 $\{ \x^t\}_t$ &   series of iterates, \ie parameters updated over iterations. \\
$\tilde{\nabla}_{\x}E_f(\x)$ &  \emph{stochastic gradient} w.r.t. $\x$ \\
$\bj_i^t$  & $i^\mathrm{th}$ column of the Jacobian map $\R^{n\times m}\ni\J^t:= \nabla_{\x} \y$ evaluated at $\x^t$ \\
$\bZ^t$ & in $ \R^{n\times m}$; \emph{normalized Jacobian} \\
$\bz_i^t $ & in $\R^n$ a column vector fo the normalized Jacobian: $\bj^t_i / \| \bj_i^t\|^2 $ \\

    \end{tabular}
    \vspace{5pt}
    \caption{\footnotesize Notations}
    \label{tab:notations}
\end{table}

\subsection{Approximating Euclidean gradients with binary gradients}
Before we start with the proof we restate \cref{prop:eucledian_to_binary} from the paper.
\begin{prop}%
{
 Let $\hat{\bPi}_{\bU} :\R^m\to\R^n$ denote the \emph{relaxed} or continuous version of our projection map. Replacing $\bU$ by a normalized gradient w.r.t. $\y$ and using $\tilde{\nabla}_{\y}E_f(\x)$ as an input, $\hat{\bPi}_{\bU}$ satisfies:
    \begin{align}
\label{eq:binary_relax}
    \hat{\bPi}_{\Z^t}(\tilde{\nabla}_{\y}E_f(\x)) = \argmin_{\bb\in\R^{n}} \|\tilde{\nabla}_{\x}E_f(\x) |_{\x^t} -\bb\|_2^2.
\end{align}
}

\end{prop}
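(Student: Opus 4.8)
The plan is to reduce the claimed identity to a routine linear least-squares computation combined with the chain rule. First I would unfold the objective defining the relaxed map: since $v_i\in\R$ and $\bg^\top\bu_i\in\R$, each summand $\|v_i-\bg^\top\bu_i\|_2^2$ is simply $(v_i-\bg^\top\bu_i)^2$, so with $\bU=\bZ^t$ the map $\hat\bPi_{\bZ^t}$ solves $\min_{\bg\in\R^n}\sum_{i=1}^m (v_i-\bg^\top\bz_i^t)^2=\min_{\bg\in\R^n}\|\bv-(\bZ^t)^\top\bg\|_2^2$, a linear least-squares problem whose minimisers are exactly the solutions of the normal equations $\bZ^t(\bZ^t)^\top\bg=\bZ^t\bv$.

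Next I would bring in the chain rule. Writing $\bv=\tilde\nabla_{\y}E_f(\x)$ and recalling that $\J^t=\nabla_\x\y$ has $i$-th column $\bj_i^t$, we have $\tilde\nabla_{\x}E_f(\x)|_{\x^t}=\J^t\,\tilde\nabla_{\y}E_f(\x)=\sum_{i=1}^m v_i\,\bj_i^t$. The claim then amounts to showing that this vector is a minimiser of the least-squares problem above, because the right-hand side of \cref{eq:binary_relax}, namely $\argmin_{\bb\in\R^n}\|\tilde\nabla_{\x}E_f|_{\x^t}-\bb\|_2^2$, is nothing but $\tilde\nabla_{\x}E_f|_{\x^t}$ itself. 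The decisive computation is to evaluate the residuals at $\bb=\sum_j v_j\bj_j^t$: using the normalisation $\bz_i^t=\bj_i^t/\|\bj_i^t\|_2^2$ one gets $\bb^\top\bz_i^t=\sum_j v_j\,(\bj_j^t)^\top\bj_i^t/\|\bj_i^t\|_2^2=v_i+\sum_{j\neq i} v_j\,(\bj_j^t)^\top\bj_i^t/\|\bj_i^t\|_2^2$, so the $i$-th residual vanishes precisely when $\sum_{j\neq i} v_j (\bj_j^t)^\top\bj_i^t=0$; all residuals vanish simultaneously — making $\bb$ a global minimiser with objective value $0$, hence a valid output of $\hat\bPi_{\bZ^t}$ — when $m=1$ or, more generally, when the columns of $\J^t$ are mutually orthogonal. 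I would state this structural condition explicitly and argue that it is the regime in which the relaxed projection acts as an exact Euclidean-to-Euclidean Jacobian, with \cref{eq:binary_relax} holding in the approximate sense already signalled in the main text otherwise.

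Finally I would address the under-determined case $m\le n$ that the paper itself flags: here $\bZ^t(\bZ^t)^\top$ is singular and the normal equations have an affine solution set $\{\bb:(\bZ^t)^\top\bb=\bv\}$ of dimension $n-m$ (assuming $(\bZ^t)^\top$ has full row rank). The residual computation of the previous step shows $\tilde\nabla_{\x}E_f|_{\x^t}$ always lies in this set, so it is a legitimate value of $\hat\bPi_{\bZ^t}$; uniqueness can be pinned down by restricting to the row space of $(\bZ^t)^\top$, i.e.\ the minimum-norm solution. The main obstacle is exactly this combination: the cross-term cancellation in the residual computation — clean for a single output but requiring the orthogonality assumption (or an approximation argument) when $m>1$ — together with the non-uniqueness of the relaxed minimiser, which forces us to read the equality in \cref{eq:binary_relax} as ``$\tilde\nabla_{\x}E_f|_{\x^t}$ is the canonical minimiser'' rather than as a naive set identity. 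Everything else — expanding the norm, differentiating, substituting the chain-rule expression — is bookkeeping.
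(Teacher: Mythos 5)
Your proposal follows essentially the same route as the paper's own proof: both reduce the claim to showing that the chain-rule gradient $\sum_j v_j\,\bj_j^t$ drives every summand of the relaxed least-squares objective to zero, so that it is a global minimiser. The difference is in how the residuals are made to vanish. The paper invokes the chain rule in the per-output form $\frac{\partial E}{\partial \x} = \frac{\partial E}{\partial y_i}\cdot\frac{\partial y_i}{\partial \x}$, which silently drops the sum over the $m$ outputs; substituting that into the $i$-th residual makes the cross terms disappear by construction and yields $\frac{\partial E}{\partial y_i}(1-1)=0$ for every $i$. Your computation $\bb^\top\bz_i^t = v_i + \sum_{j\neq i} v_j\,(\bj_j^t)^\top\bj_i^t/\|\bj_i^t\|_2^2$ exposes exactly what that step assumes: the residuals all vanish only when $m=1$ or the columns of $\J^t$ are mutually orthogonal, and otherwise the true gradient is only an approximate minimiser. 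Making that structural hypothesis explicit, and observing that for $m\le n$ the relaxed minimiser is an affine set so the equality must be read as membership (or as a canonical, e.g.\ minimum-norm, selection) rather than a set identity, is strictly more careful than the paper's version. In short: same strategy, but you have correctly surfaced a hidden orthogonality assumption in the paper's own argument rather than introduced a gap of your own; it would be worth stating that condition as part of the proposition.
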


\begin{proof}
We now give the proof for the continuous map given in~\cref{eq:binary_relax}, where we optimise over $\R^{n}$. We now expand~\cref{eq:binary_relax}:
\begin{align}
    \frac{\partial E}{\partial \x}= \argmin_{\bg \in \R^{n}}\sum_{i = 1}^m|\frac{\partial E}{\partial y_i} - \bg^\top \z_i^t |^2.
    \label{eq:objective_nn}
\end{align}
This statement is true if every term in the sum is $0$. If this holds, we can replace $\bg$ by $\frac{\partial E}{\partial \x}$ and seek to have
\begin{align}
    \frac{\partial E}{\partial y_i} - \frac{\partial E}{\partial \x}^\top\bz_i^t  = 0.
    \label{eq:statement_to_show}
\end{align}
Assuming differentiability and compositionally, we can use the chain rule to write
\begin{equation}\frac{\partial E}{\partial \x} = \frac{\partial E}{\partial y_i} \cdot \frac{\partial y_i}{\partial \x}.
\label{eq:splitGradient}
\end{equation}
By \cref{eq:splitGradient}, and the definition of $\z_i^t$ we can reformulate the left hand side of \cref{eq:statement_to_show} as follows:

\begin{align}
    \frac{\partial E}{\partial y_i} - \left(\frac{\partial E}{\partial \x}\right)^\top  \left(\frac{\partial y_i}{\partial \x} / \|\frac{\partial y_i}{\partial \x} \|^2\right) &=\frac{\partial E}{\partial y_i} - \frac{\partial E}{\partial y_i}\left(\frac{\partial y_i}{\partial \x}\right)^\top \left(\frac{\partial y_i}{\partial \x} / \|\frac{\partial y_i}{\partial \x} \|^2\right) \\
    &= \frac{\partial E}{\partial y_i}\left(1 -  \left(\frac{\partial y_i}{\partial \x} / \|\frac{\partial y_i}{\partial \x} \|\right)^\top \left(\frac{\partial y_i}{\partial \x} / \|\frac{\partial y_i}{\partial \x} \|\right)\right)
   \label{eq:start_of_matrixes}\\
    &=\frac{\partial E}{\partial y_i}(1 - 1) = 0 \qquad \mathrm{for\,all} \,\, i.\label{eq:jacmap}
\end{align}
Alternatively, we can arrive at the same proof by using the fact that Jacobians map tangent vectors, \ie, local isomorphisms between the tangent spaces of input and output points. Finally, by satisfying~\cref{eq:jacmap} for all $i$, we see that in the continuous space, \cref{eq:objective_nn} is fulfilled. 
\end{proof}

Note that we can use this mapping to calculate binary updates also for multidimensional function parameters, such as matrices.
This can easily be done by vectorising the parameter matrix into $m$ vectors and computing $m$ times the mapping of the gradient of a parameter vector.

\subsection{Proof of the fixed point theorem}
Before we prove the fixed point theorem, we present the definition of a fixed point and reiterating the theorem.
\begin{dfn}[Fixed point]
    $\bs \in \{ \pm 1 \}^n$  is a fixed point for \nameShortTwo, 
    if $\bs^0 = \bs$ in \cref{def:binary_spgd} implies that $\bs^t = \bs$ for all $t = 1, 2, ...$, where $\bs^t = \sign(\x^t)$.
    We say that \nameShortTwo~converges if there exists $t < \infty$ such that $\bs^t$ is a fixed point.
\end{dfn}
\begin{thm}[Fixed point of \nameShortTwo]
Let $\Z^s$ be the normalised Jacobian of the function $f$ at $\mathbf{s} \in \{\pm 1\}^n$ . Let $\x^t$ be the iterative produced by Eq. 7 from the paper. We further assume, that for $i \in [n]$ $(\nabla_{\x}E_f(\x^t))_i \neq 0$ and  $(\nabla_{\x}E_f(\bs))_i \neq 0$. Under~the assumption of a diverging learning rate ($\sum_{i \in \mathbb{N}} \alpha_i = \infty$ and $\alpha_i>0$)
   , $\mathbf{s}$ is a fixed point for \nameShortTwo~if and only if ($\Leftrightarrow$) $-\bPi_{\Z^s} (\nabla_{\mathbf{y}}E_f(\mathbf{s})) = \bs$. This point might not exist, in which case we cannot state the convergence of \nameShortTwo~for any starting point $\x^0 \in \R^n$. 
\end{thm}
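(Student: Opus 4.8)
The plan is to characterise fixed points directly from the update rule in~\cref{def:binary_spgd} and to use the divergent learning-rate assumption (\cref{ass:LR}) exactly as in Bai~\etal~\cite{bai2018proxquant}, Prop.~5.3. The update is $\x^{t+1} = \x^t - \alpha_t \bPi_{\Z^t}(\widetilde{\nabla}_{\y} E_f(\hat{\x}^t))$, and the iterate that matters is $\bs^t = \sign(\x^t)$. So the first step is to reduce ``$\bs$ is a fixed point'' to a coordinate-wise statement: if $\bs^0 = \bs$, then the binary projection $\bPi_{\Z^s}(\nabla_{\y} E_f(\bs))$ is a fixed vector (it depends only on $\bs$, not on $t$, since $\hat{\x}^t = \bs$ as long as $\bs^t = \bs$), so the update just keeps adding $-\alpha_t$ times the same vector $\bp := \bPi_{\Z^s}(\nabla_{\y}E_f(\bs))$ to $\x^t$. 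Hence $\x^t = \x^0 - \big(\sum_{k<t}\alpha_k\big)\bp$.

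The second step is the ($\Leftarrow$) direction. Suppose $-\bp = \bs$, i.e.\ $\bp_i = -\bs_i$ for every $i$. Then $\x^t_i = \x^0_i + \big(\sum_{k<t}\alpha_k\big)\bs_i$. Since $\sign(\x^0_i) = \bs_i$ and $\sum_{k<t}\alpha_k \ge 0$ is nondecreasing, $\x^t_i$ only moves further in the direction of its own sign, so $\sign(\x^t_i) = \bs_i$ for all $t$; thus $\bs^t = \bs$ for all $t$ and $\bs$ is a fixed point. (Here I use $\alpha_k > 0$; divergence is not even needed for this direction, only nonnegativity and the sign-consistency of the starting point, which is automatic since $\bs^0 = \sign(\x^0) = \bs$.)

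The third step is the ($\Rightarrow$) direction, and this is where \cref{ass:LR} does the real work. Suppose $-\bp \neq \bs$, so there is a coordinate $i$ with $\bp_i \neq -\bs_i$. Since $\bp \in \{\pm1\}^n$, this means $\bp_i = \bs_i$. Then $\x^t_i = \x^0_i - \big(\sum_{k<t}\alpha_k\big)\bs_i$, and because $\sum_k \alpha_k = \infty$ the term $\big(\sum_{k<t}\alpha_k\big)\bs_i$ eventually dominates $\x^0_i$ in magnitude with the opposite sign, so for $t$ large enough $\sign(\x^t_i) = -\bs_i \neq \bs_i$. Hence $\bs^t \neq \bs$ for some $t$, so $\bs$ is \emph{not} a fixed point. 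One has to be slightly careful that the coordinate flips \emph{before} any other coordinate flip changes $\hat{\x}^t$ and hence changes $\bp$; but the argument is monotone, so one simply takes the first time any coordinate flips — at that instant the hypothesis $\hat{\x}^t = \bs$ still held up to the previous step, the flipped coordinate $j$ satisfies $\bp_j = \bs_j$, and this already contradicts $-\bp = \bs$ being necessary, closing the contrapositive. Contraposition of these two implications gives the stated equivalence. Finally, the last sentence of the theorem is immediate: if no $\bs \in \{\pm1\}^n$ satisfies $-\bPi_{\Z^s}(\nabla_{\y}E_f(\bs)) = \bs$, then by what we just proved no binary point is fixed, and since $\bs^t$ ranges over $\{\pm1\}^n$ the algorithm never stabilises, for any $\x^0$.

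The main obstacle I anticipate is not any single inequality but the bookkeeping around the ``first coordinate to flip'': one must argue that the representation $\x^t = \x^0 - (\sum_{k<t}\alpha_k)\bp$ with a \emph{constant} $\bp$ is valid precisely up to the first flip, and that this window is long enough (thanks to $\sum_k\alpha_k = \infty$) to force at least one flip if $-\bp \neq \bs$. The nonvanishing-gradient assumptions $(\nabla_\x E_f(\bs))_i \neq 0$ and $(\nabla_\x E_f(\x^t))_i \neq 0$ enter here to guarantee $\bPi_{\Z^s}$ is well-defined (no zero column of the Jacobian trivialising a coordinate) and that $\sign$ is applied to nonzero arguments so the coordinate-wise sign bookkeeping is unambiguous. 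Everything else is the same monotone-drift argument as in~\cite{bai2018proxquant}.
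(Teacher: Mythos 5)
Your proposal is correct and follows essentially the same route as the paper's proof (the Bai~\etal~Prop.~5.3-style argument): the iterate drifts linearly along the constant vector $-\bPi_{\Z^s}(\nabla_{\y}E_f(\bs))$ while the sign pattern is unchanged, so divergence of $\sum_t \alpha_t$ forces the limiting sign to be $-\bPi_{\Z^s}(\nabla_{\y}E_f(\bs))$, and conversely sign-preservation is immediate when that vector equals $\bs$. Your contrapositive formulation of the forward direction, with the explicit ``first coordinate to flip'' bookkeeping, is if anything slightly more careful than the paper's direct limit computation, but it is the same argument.
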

\begin{proof}
Our proof follows closely the proof of~\cite{bai2018proxquant}. We first define the \emph{fixed point} as a stationary point.
We start with the direction $\Rightarrow$. Hence we assume that $\mathbf{s} \in \{ \pm 1\}^n$ is a fixed point.
We start by restating the update rule for point $\x^T$ starting from a \textbf{fixed point} $\bs$:
\begin{align}
   \x^T = \x^0 - \sum_{t = 0}^{T - 1} \alpha_i \bPi_{\bZ^t}(\nabla_{\y}E_f(\mathbf{s})). %
\end{align}

Hence $\x^T = \x^0$ holds by definition for all $T\in\mathbb{N}_+$. As the fixed point is the same in the space of reals or binary variables, we are allowed to take the sign of both sides:
\begin{align}
   \bs:=\sign(\x^T) = \sign\left(\x^0 - \bPi_{\bZ^{T-1}}(\nabla_{\y}E_f(\bs)) \sum_{i = 0}^{T-1} \alpha_i\right).
\end{align}
Taking the limit $T \rightarrow \infty$ and applying the assumption that $\sum_t \alpha_t=\infty$ yields
\begin{align}
s_i &=\lim _{T \rightarrow \infty} \operatorname{sign}\left(\x^0-\bPi_{\bZ^{T-1}}(\nabla_{\y} E_f(\bs)) \sum_{t=0}^T \alpha_t\right)_i \nonumber \\
&=\sign(-\bPi_{\bZ^{T-1}}(\nabla_{\y} E_f(\bs)))_i)  \\
&=-\bPi_{\bZ^{T-1}}(\nabla_{\y} E_f(\bs)))_i.\nonumber
\end{align}
concluding that $\bs=-\bPi_{\bZ^{T-1}}(\nabla_{\y} E_f(\bs)))$.

Next, we prove the direction " $\Leftarrow$ ". If $\bs$ obeys that $-\bPi_{\bZ^{T-1}}(\nabla_{\y} E_f(\bs))=\bs$ for all $i \in[n]$, then if we take any $\x^0$ such that $\operatorname{sign}\left(\x^0\right)=\bs, \x^t$ will move in a straight line towards the direction of $-\bPi_{\bZ^{T-1}}(\nabla_{\y} E_f(\bs))$. In other words, $\operatorname{sign}\left(\x^t\right)=\operatorname{sign}\left(\x^0\right)=\bs$ for all $t=0,1,2, \ldots$. Therefore, by definition, $\bs$ is a fixed point.
\end{proof}

\subsection{QUBO term of BM}
\begin{prop}[BM as QUBO]\label{prop:ising_for_bp}
The binary projection $\bPi_{\bZ^t}(\bv)$ in \cref{def:bp} admits the following \emph{Ising Model} or quadratic unconstrained binary optimisation (QUBO) form:
\begin{align}\label{eq:BP_as_qubo}
    \bPi_{\bU} (\bv) &= \argmin_{\bg \in \{-1,1 \}^{n}} \bg^\top\sum_{i=1}^m\bQ_i\bg + \bs^\top\bg 
\end{align}
where $\bs=-2\sum_{i=1}^m v_i\bu_i$, $\bQ_i={\bu_i}{\bu_i}^\top$ and $\bQ=\sum_{i=1}^m\bQ_i$.
\end{prop}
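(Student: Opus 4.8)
The plan is to derive the QUBO form directly by expanding the squared Euclidean norm in the objective of \cref{def:bp}. Starting from
\[
\bPi_{\bU}(\bv) = \argmin_{\bg\in\{-1,1\}^n}\sum_{i=1}^m \|v_i - \bg^\top\bu_i\|_2^2,
\]
I would note that each summand is a scalar (since $v_i\in\R$ and $\bg^\top\bu_i\in\R$), so the norm is just the square: $(v_i - \bg^\top\bu_i)^2$. Expanding this gives $v_i^2 - 2 v_i(\bg^\top\bu_i) + (\bg^\top\bu_i)^2$. The first term $v_i^2$ is constant in $\bg$ and can be dropped from the $\argmin$. The middle term is linear in $\bg$: $-2v_i\bg^\top\bu_i = \bg^\top(-2v_i\bu_i)$, which after summing over $i$ contributes the linear coefficient $\bs = -2\sum_{i=1}^m v_i\bu_i$.

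The quadratic term requires rewriting $(\bg^\top\bu_i)^2$ as a genuine quadratic form in $\bg$. Here I would use the identity $(\bg^\top\bu_i)^2 = (\bg^\top\bu_i)(\bu_i^\top\bg) = \bg^\top(\bu_i\bu_i^\top)\bg = \bg^\top\bQ_i\bg$ with $\bQ_i = \bu_i\bu_i^\top$. Summing over $i$ yields $\bg^\top\bigl(\sum_{i=1}^m\bQ_i\bigr)\bg = \bg^\top\bQ\bg$. Collecting the linear and quadratic pieces and discarding the $\bg$-independent constant $\sum_i v_i^2$, the minimiser is unchanged, so
\[
\bPi_{\bU}(\bv) = \argmin_{\bg\in\{-1,1\}^n} \bg^\top\bQ\bg + \bs^\top\bg = \argmin_{\bg\in\{-1,1\}^n}\bg^\top\sum_{i=1}^m\bQ_i\bg + \bs^\top\bg,
\]
which is exactly the claimed Ising/QUBO form (the distinction between a QUBO on $\{0,1\}^n$ and an Ising model on $\{-1,1\}^n$ being cosmetic, handled by an affine change of variables, so I would just remark on it rather than carry it out).

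There is essentially no hard obstacle here — the statement is a straightforward algebraic reformulation. The only point worth being careful about is the rewriting $(\bg^\top\bu_i)^2 = \bg^\top\bu_i\bu_i^\top\bg$, i.e.\ making sure the rank-one matrix $\bQ_i$ is symmetric so it is a legitimate QUBO matrix, and noting that on $\{-1,1\}^n$ the diagonal entries $g_i^2 = 1$ contribute only a constant and so the off-diagonal structure is what matters. I would also briefly note that the time superscript $t$ appearing in \cref{prop:ising_for_bp_paper} (where $\bU = \bZ^t$, $\bu_i = \bz_i^t$) plays no role in the derivation; the identity holds for any fixed matrix $\bU$. Hence the proof is complete after the expansion above.
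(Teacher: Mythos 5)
Your proposal is correct and follows essentially the same route as the paper's proof: expand the squared norm, drop the constant $\sum_i v_i^2$, and collect the linear term into $\bs=-2\sum_i v_i\bu_i$ and the quadratic term into $\bQ=\sum_i\bu_i\bu_i^\top$. Your additional remarks (symmetry of the rank-one $\bQ_i$, the diagonal contributing only a constant on $\{-1,1\}^n$) are correct but not needed beyond what the paper's expansion already establishes.
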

\begin{proof}
    
\begin{align}\label{eq:BP}
\bPi_{\bU} (\bv)&= \argmin_{\bg \in \{-1,1 \}^{n}} \sum_{i=1}^m||v_i - \bg^\top \bu_i||_2^2 \\
&= \argmin_{\bg \in \{-1,1 \}^{n}} \sum_{i=1}^m (v_i - {\bu_i}^\top \bg)^\top (v_i - {\bu_i}^\top \bg) \\
&= \argmin_{\bg \in \{-1,1 \}^{n}} \sum_{i=1}^m \bg^\top{\bu_i}{\bu_i}^\top\bg - 2\sum_{i=1}^m v_i {\bu_i}^\top\bg + \sum_{i=1}^m v_i^2 \\
&= \argmin_{\bg \in \{-1,1 \}^{n}} \bg^\top\sum_{i=1}^m\bQ_i\bg + \bs^\top\bg \label{eq:qubo_equation_supp}
\end{align}
where $\bs=-2\sum_{i=1}^m v_i\bu_i$, $\bQ_i={\bu_i}{\bu_i}^\top$ and $\bQ=\sum_{i=1}^m\bQ_i$.

\end{proof}

\paragraph{Runtime analysis on quantum and central processing units} In general the experiments are relatively easy on the hardware and are run within minutes on a single CPU.
Hence we will only focus on the QPU time, as well as on the graph benchmarks.
For all five Rosenbrock runs we use a total of one minute QPU time.
The small experiments on the Adult dataset as well as the Karateclub network are solved within an hour.
Experiments on graph convolutional neural networks for the bigger datasets Cora and Pubmed run on the CPU for six hours.
In those experiments, we did not need to use any GPU time during training.
The experiments on D-Wave for MLPs are running for 6 hours and one run consumes about a minute of QPU time.
This is explainable with the minor embeddings.
Calculating those minor embeddings is a non-critical combinatorial optimisation problem and, at the moment, is \emph{known} to be a bottleneck for quantum annealers. We are hopeful that those can be accelerated and make all QA algorithms faster, in the future.

To generate all the experiments in our paper we spent an hour of QPU time on a D-Wave quantum annealer as well as 1000h of CPU time.

\section{Implementation Details}

Our code, which we will release, is implemented in PyTorch~\cite{NEURIPS2019_9015}. 
\sloppy To calculate all gradients in all the experiments, we use the torch autograd library. 
In the graph experiments with Cora and Pubmed we use the Adam optimiser \cite{kingma2014adam} with the PyTorch implementation.
For BinaryConnect~\cite{binaryconnect_2015} we use our own implementation. However, contrary to the BinnaryConnect~\cite{binaryconnect_2015} paper we are using our definition of the sign function and not theirs, which projects $\sign(0) = 1$ or does some random assigning.
For ProxQuant we also use our own implementation, following the paper.
The code will be released upon acceptance.

\subsection{MNIST Feature Extraction}
In the paper, we already described how we extract the features of the MNIST images.
In this section, we give a more detailed overview of this procedure as well as a qualitative example in \cref{fig:mnist_generation}.
To this end, we follow a three-step procedure to generate the 16-bit features: (i) We use the key points generated from the method described in \cite{monti2017geometric}. They are marked in \cref{fig:mnist_generation} as blue dots in the middle and right image. Afterwards in step (ii) we sample 16 lines, which all run through the centroid of the image. \cref{fig:mnist_generation} in the right image contains four out of the 16 lines drawn in green.
For the last and (iii) step, we count the number of feature points above and below the line. If there are more key points above line $i$ than below, then we assign the feature $i$ to $1$; otherwise to $-1$. %
\begin{figure}
    \centering
  \includegraphics[width=\linewidth]{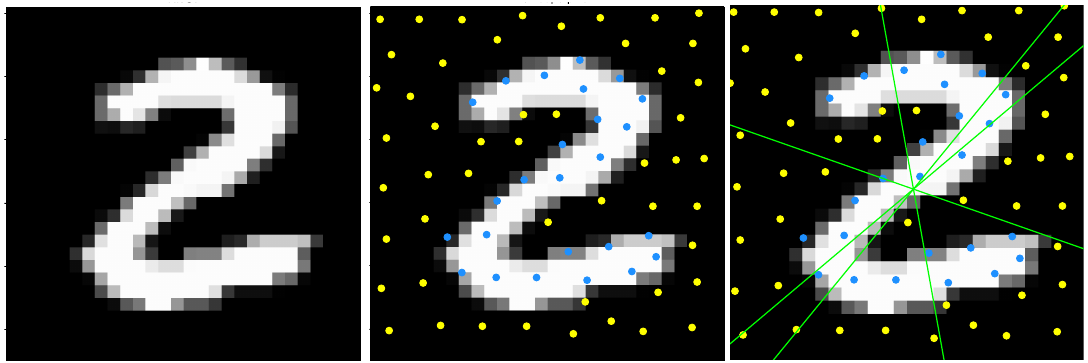}
  \vspace{5pt}
  \caption{\footnotesize Label generation on MNIST \cite{lecun-mnisthandwrittendigit-2010} (example). 
\textit{Left:} Initial image of the digit ``2'';  \textit{Middle:} Extracted keypoint features with the superpixel approach \cite{monti2017geometric}; \textit{Right:} sampling four lines to determine binary features.}%
\label{fig:mnist_generation}
\end{figure}

\subsection{Definitions}
\paragraph{Hard tanh}
Hard tanh is defined as follows:
\begin{align}\label{eq:hardtanh}
\mathsf{hardTanh}(x) = 
\begin{cases}
    -1 & \text{ if } x \leq -1, \\
    x & \text{ if }  -1 < x < 1, \\
    1 & \text{ if } x \geq 1, \\
\end{cases}.
\end{align}

\paragraph{sign}
We use the following definition of the sign function:
\begin{align}\label{eq:sign}
\sign(x) = 
\begin{cases}
    -1 & \text{ if } x < 0, \\
    1 & \text{ if } x \geq 0. \\
\end{cases} 
\end{align}
\section{CDP hypothesis test for neural networks}
We will use the $H_0$ hypothesis Z test to determine if our algorithm is more likely to point in the same direction as the gradient.
As a comparison, we will conduct the same test on the established binary baseline signSGD.
We choose as our hypothesis: 

$\mathbf{H_0}$ that the projected gradient points in a random direction, that means:
\begin{align}
\begin{split}
       \rho_i(\x^t) = \operatorname{Prob}\left( \bPi_{\Z^t}(\tilde{\nabla}_\y E_f(\x^t))_i=\operatorname{sign} (\nabla_\x E_f(\x^t))_i\right) = 0.5
   \end{split}
\end{align}

$\mathbf{H_1}$ that the projected gradient points in the same/opposite direction
\begin{align}
\begin{split}
       \rho_i(\x^t)= \operatorname{Prob}\left( \bPi_{\Z^t}(\tilde{\nabla}_\y E_f(\x^t))_i=\operatorname{sign} (\nabla_\x E_f(\x^t))_i\right) > 0.5
       \label{eq:h1hypothesis}
       \end{split}
\end{align}
or
\begin{align}
\begin{split}
       \rho_i(\x^t) 
       = \operatorname{Prob}\left( \bPi_{\Z^t}(\tilde{\nabla}_\y E_f(\x^t))_i=-\operatorname{sign} (\nabla_\x E_f(\x^t))_i\right) > 0.5
       \end{split}
\end{align}

We can clearly see that either $\mathbf{H_0}$ or $\mathbf{H_1}$ is true.

The data collection is done by measuring the gradients during training on the MNIST experiment in the paper, where we use the gradient on the full training set as the real non-stochastic gradient.

Let $k$ be the number of occurrences where the gradient points in the same direction as our projected gradient, $p = 0.5$ as our $H_0$ probability and $n$ as the number of compared gradients. 
The Z test for binary variables is formulated as follows:
\begin{align}
    Z=\frac{k-n p}{\sqrt{n p(1-p)}}.
    \label{eq:zvalue}
\end{align}

A Z value in the interval of $[-1.96, 1.96]$ provides a $95\%$ certainty that $\mathbf{H_0}$ is fulfilled.

The following two tables \cref{tab:zvalues_qspgd} and \cref{tab:zvaluessignsgd} show a sample collection from 5 runs. In the first column, we describe the number of samples, where our projected gradient and the real gradient point in the same direction, in the second column we have all samples and in the last column we note the Z value for the experimental run, which was calculated with \cref{eq:zvalue}.

\paragraph{Discussion}
As we can see in \cref{tab:zvalues_qspgd} and \cref{tab:zvaluessignsgd}, when calculating the gradient with \nameShort~the Z-value is always bigger than $1.96$. This shows that we reject the $\mathbf{H_0}$ hypothesis and accept the $\mathbf{H_1}$ hypothesis.
Hence we can be confident that the CDP holds in our case.
As the tables further present, the projected gradients point in the same direction as the real gradient, then away. Hence we can narrow down the $\mathbf{H_1}$ hypothesis to the CDP assumption.

Further to verify our experiments we also tested signSGD, where the CDP assumption is indicated by a few lemmas. Here we observe that $\mathbf{H_0}$ can be rejected, but on a weaker basis, as the Z values are smaller. This further indicates that our algorithm fulfils the CDP assumption.

\begin{table}
  \begin{minipage}{.5\linewidth}
\centering
    \resizebox{\textwidth}{!}{
    \begin{tabular}{c|c | c}
       $\#$ of same direction & total $\#$ of samples & Z-value\\
        \hline
        2285 & 3915 & 10.4\\
        2199 & 3647 & 12.4 \\ 
        2311 & 3886 & 11.8 \\ 
        2334 & 3984 & 10.8\\
        2356 & 3990 & 11.4
    \end{tabular}
    }
    \vspace{5pt}
    \caption{\footnotesize Samples from \nameShort~directions.}\label{tab:zvalues_qspgd}
    
  \end{minipage}%
  \begin{minipage}{.5\linewidth}
    \centering
    \resizebox{\textwidth}{!}{
     \begin{tabular}{c|c | c}
         $\#$ of same direction & total $\#$ of samples & Z-value\\
        \hline
        2334 & 4626 & 0.61\\
        732 & 1176 & 8.39 \\ 
        339 & 630 & 1.91 \\ 
        2559 & 4702 & 6.06\\
        2103 & 4683 & -6.9
    \end{tabular}
    }
        \vspace{5pt}
   \caption{\footnotesize Samples from signSGD.}\label{tab:zvaluessignsgd}
  \end{minipage}
\end{table}

\section{Additional experiments}

\begin{wrapfigure}[7]{r}{0.3\linewidth}
\vspace{-25pt}
    \centering
    \includegraphics[width=1\linewidth]{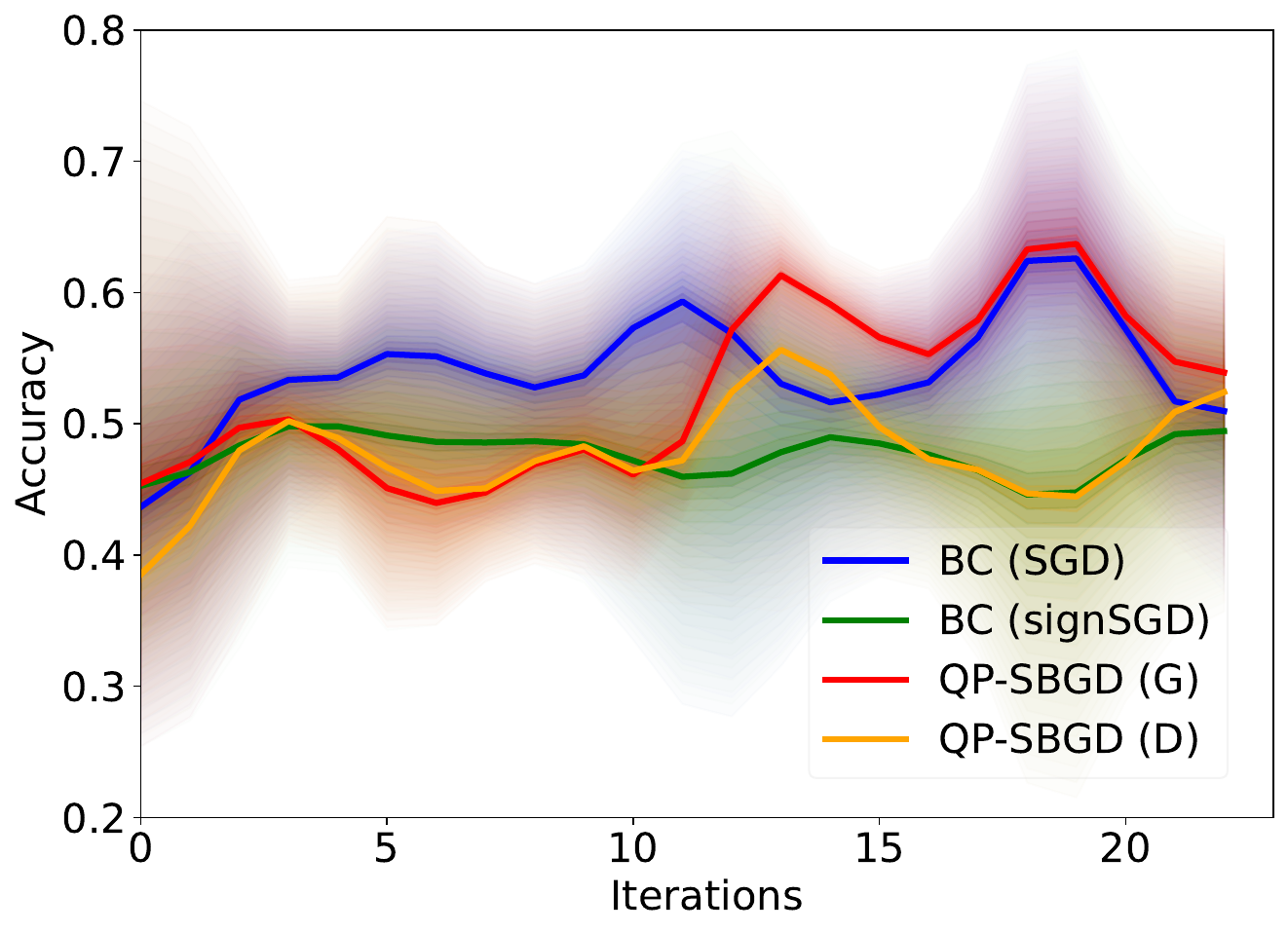}
    \vspace{5pt}
    \caption{\footnotesize Training accuracy on a subset of the UCI Adult dataset for binary classification.}
    \label{fig:adult_training_acc}
\end{wrapfigure}

In this section, we present additional experiments on the UCI adult dataset, some qualitative examples on the MNIST dataset, and some D-Wave analysis.

\subsection{Quantitative results on UCI Adult}
We now use three manually selected features and 30 batches with a batch size of 20 to train a two-layer neural network with the input dimension of $3$, hidden dimension of $5$ and the output of $2$.
\cref{fig:adult_training_acc} reports the training accuracy where we observe that \nameShort~(G) outperforms the baselines as well as our D-Wave optimiser, \nameShort~(D), which nevertheless performs better than or on par with BC (signSGD). 

\subsection{Qualitative Results of MNIST classification}

\begin{figure}
  \begin{minipage}{.45\linewidth}
       \centering
        \includegraphics[width=\linewidth]{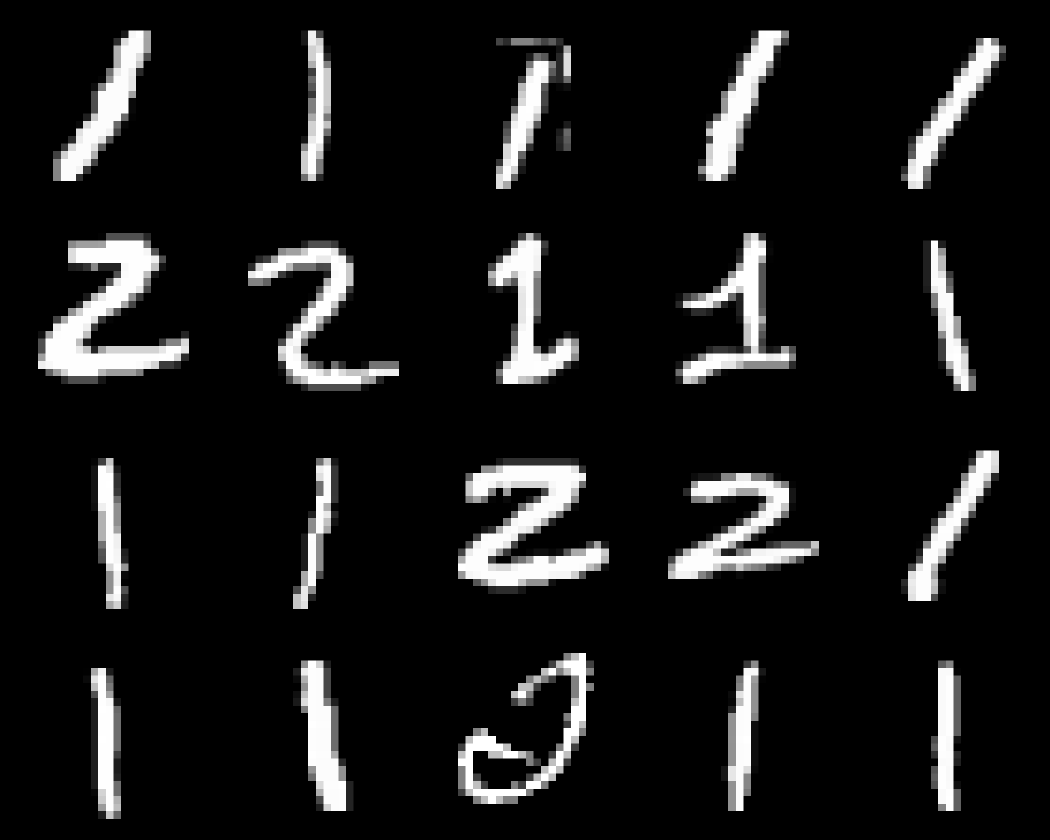}
         \vspace{5pt}
         \caption{\footnotesize Classification of 1s}
        \label{fig:class1}
  \end{minipage}%
  \hfill
  \begin{minipage}{.45\linewidth}
       \centering
        \includegraphics[width=\linewidth]{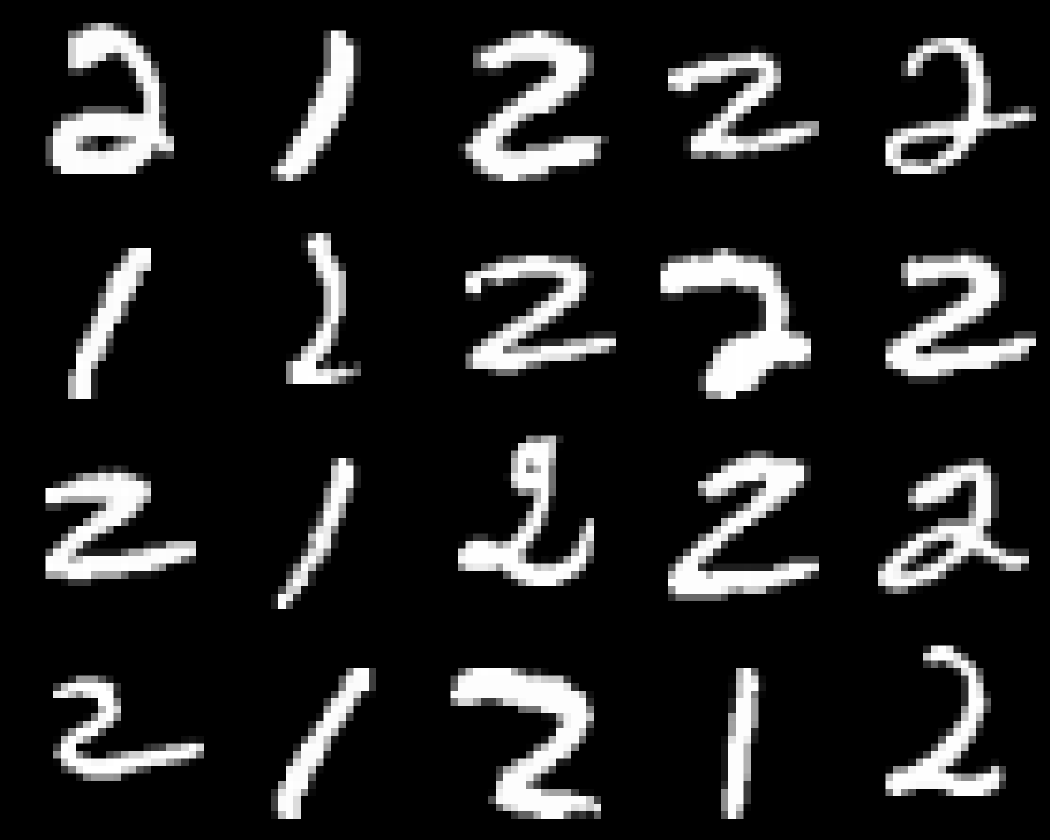}
        \vspace{5pt}
        \caption{\footnotesize Classification of 2s}
        \label{fig:class2}
  \end{minipage}
\end{figure}

In \cref{fig:class1} and \cref{fig:class2}, we show a few qualitative results on the MINIST digits dataset. We select one of the networks trained to classify the digits '1' and '2' with \nameShort~(D-Wave). Similar to the numbers reported in Tab 1 in the paper, the accuracy on those samples is $72.5\%$.

\subsection{Additional analysis of the QUBO problems using D-Wave annealer}
\begin{figure*}%
\begin{center}
\includegraphics[width=\textwidth]{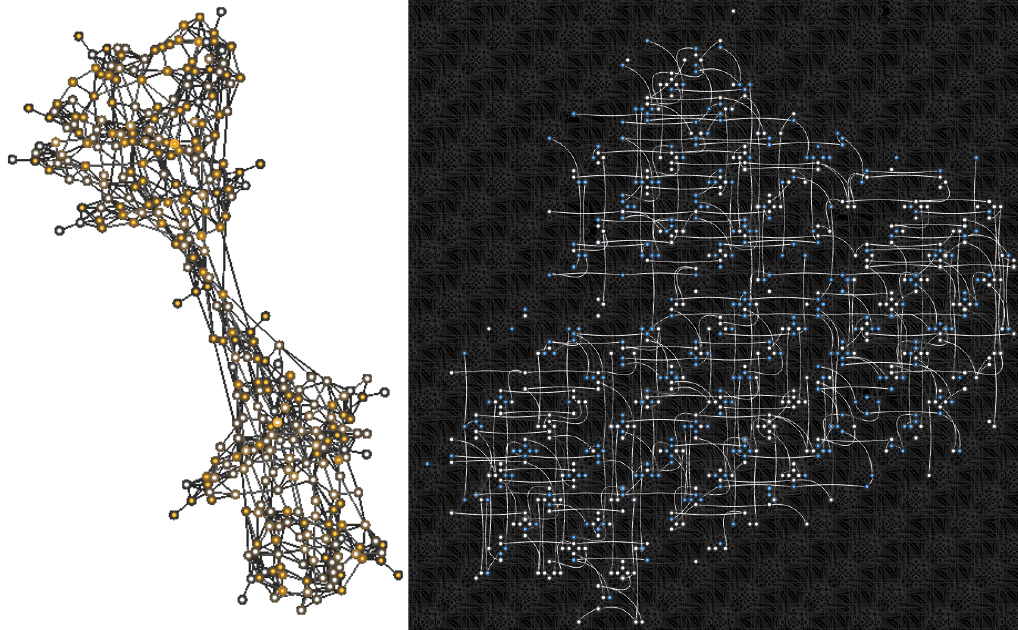}
\vspace{5pt}
    \caption{\footnotesize \textit{Left}: QUBO problem graph to update the weights of
the second layer in the first training iteration \textit{Right}: Embedded
problem graph on the Pegasus \cite{Boothby2020arXiv} D-Wave architecture.}
    \label{fig:snd_embedding}
\end{center}
\end{figure*}

\textbf{Embedding.} 
To solve QUBO problems on D-Wave, we minor-embed them onto the Pegasus architecture. As shown in~\cref{fig:snd_embedding}, this operation interprets the logical graph (left) and implements each \emph{logical bit} with multiple, redundant qubits (right). 
The graph in the figure is an exemplary QUBO corresponding to training the third layer in our MNIST experiment.

\vspace{1mm}\noindent\textbf{Evolution of the QUBO}
Next, we investigate the development of the QUBO during training. In~\cref{fig:evolution_supp} we observe how the QUBO changes over time/iterations.
The QUBO matrices shown in the figure are examples from the MNIST experiment (Sec. 3.3.2) taken from the first layer.
We display one QUBO after every $10^{\mathrm{th}}$ update.
We observe here that all QUBOs have a rather densely connected support of the otherwise sparse graph.
This overall sparsity of the graph enables us to embed those with minor embeddings and deploy them to the D-Wave QPU.

\vspace{2mm}\noindent\textbf{Sampling solutions.} 
After embedding each QUBO problem in the D-Wave Advantage architecture, we query the quantum annealer multiple times and retrieve the solution corresponding to the lowest energy. However, it is still interesting to see the solutions' distribution. To this end, we analyze QUBOs, which calculate the weight updates of the MNIST experiment in the third layer, using D-Wave Explorer and measure a series of QUBOS, which are 10 updates apart.~\cref{fig:sample_hist} plots the different energies for the sampled solutions. 
The histogram in the upper left corner corresponds to the QUBO update problem in the first iteration and the histogram in the lower right corner corresponds to the QUBO problem after 50 updates (\emph{e.g}., the $51^{\text{st}}$ update). 
In total, we collected 1000 samples from D-Wave and compared the energy.
The resulting distribution can be described as a tail-heavy Poisson distribution, which is shifted towards negative numbers. 
This type of resulting distribution is typical for problems, which are difficult to compute.

\paragraph{Similarity} In~\cref{fig:similarity_sol} we compare the similarities of the sampled D-Wave solutions.
Here the 10 scores with the lowest energy are compared with the Jaccard Similarity.
We can see that all QUBO solutions are fairly dissimilar to each other.
As seen in the picture only the first and fourth/fifth solutions have a relatively high similarity score of around $0.4$.
The indications of that are two-fold: (i) it demonstrates the importance of finding the optimal solution, and (ii) even though the D-Wave annealer cannot act like a true Bayesian posterior sampler, the samples from its (modified) posterior are diverse and span the solution space. This also confirms the findings of~\cite{QuantumSync2021} in which these samples are used as alternative solutions.

\begin{figure*}
    \centering    \includegraphics[width=\textwidth]{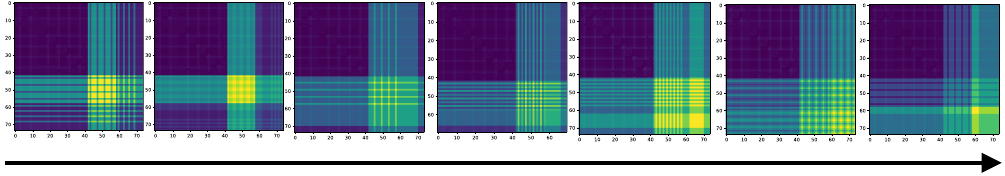}
    \vspace{5pt}
    \caption{\footnotesize Evolution of the QCBO formulation to calculate the weight updates for the first layer with the \name~algorithm.\vspace{-2mm}}
    \label{fig:evolution_supp}
\end{figure*}

\begin{figure}
   \begin{minipage}{0.65\linewidth}
       \centering
       \includegraphics[width=\textwidth]{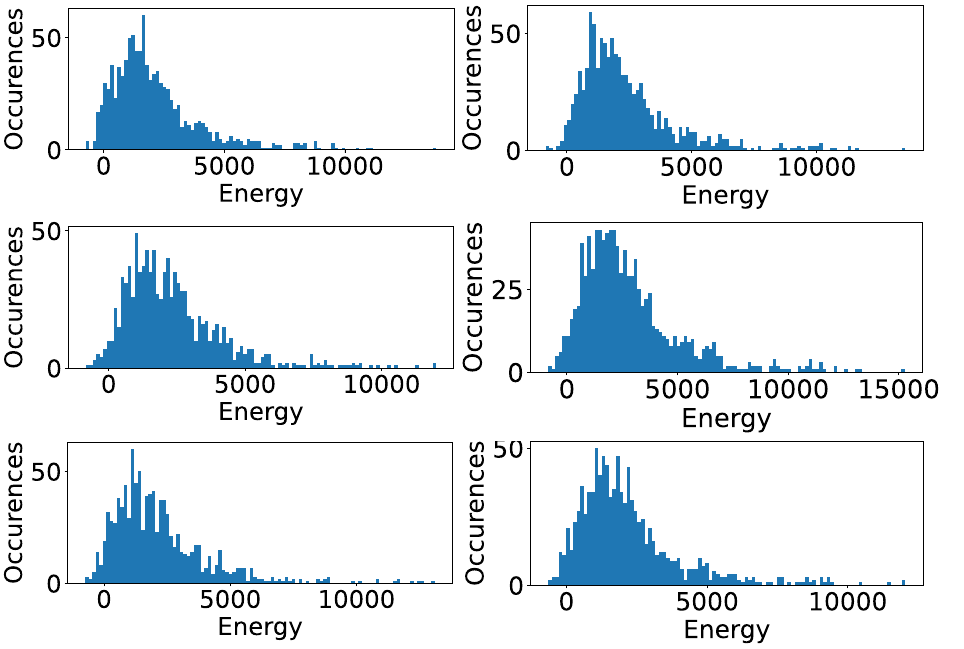}
       \vspace{5pt}
       \caption{\footnotesize Histogram of the different sampling energies of the weight update QUBO of the third layer. In the first column, we show the histogram of the zeroth, tenth, and twentieth updates, and in the second column the histogram of the thirtieth, fortieth, and fiftieth updates from top to bottom} \label{fig:similarity_sol}
   \end{minipage}%
 \hfill
   \begin{minipage}{0.25\linewidth}
   \centering
       \includegraphics[width=\textwidth]{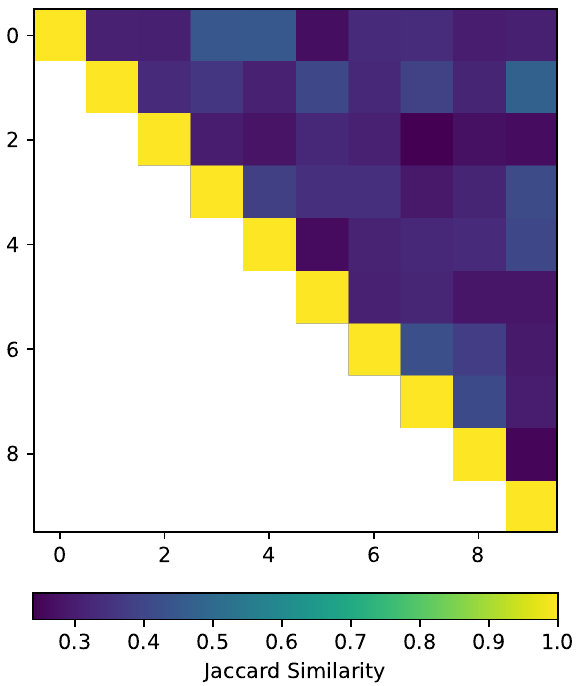}
       \vspace{5pt}
       \caption{\footnotesize Jaccard similarity of the  10 samples with the lowest energy.} 
     \label{fig:dwavedebug}
   \end{minipage}
 \label{fig:sample_hist}
\end{figure}

\paragraph{Ablation studies} 
~\cref{fig:ablation} tests the effect of different hyperparameters on Cora~\cite{mccallum2000automating}. Our hybrid optimiser, \nameShort, is robust to batch size changes, reported as training data percentages, and slightly more impacted by the step size.

  \begin{figure}%
    \centering
        \includegraphics[width = \linewidth]{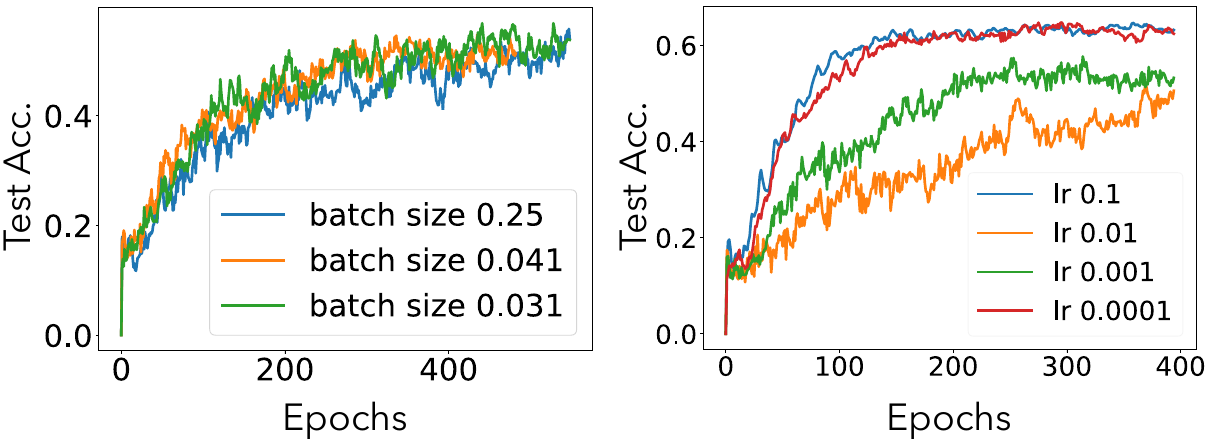}
        \vspace{5pt}
    \caption{\footnotesize Impact of learning rate (LR) and batch sizes on the test accuracy. We plot the mean accuracy over six runs.\vspace{-4mm}}
    \label{fig:ablation}
  \end{figure}

  \paragraph{Spectral gap}
A crucial feature is the \emph{spectral gap} in QUBOs solved by a quantum annealer. A larger spectral gap significantly enhances the likelihood of the annealer converging to the accurate ground truth solution \cite{mcleod2022benchmarking}.
We now examine, via simulation, the spectral gap corresponding to the QUBO in~\cref{prop:ising_for_bp_paper} for small MLPs.~\cref{fig:eigs_all} plots the eigenvalues (blue lines) of the time-varying Hamiltonian as a function of $t/T$, for a single binary linear layer $l$ with up to four neurons and batch size $1$ (besides these parameters, the size of $H$, and hence the number of distinct eigenvalues, depend on other factors; more on the feasibility of simulation below). The smallest spectral gap is highlighted by a red bar in the plots. With the same annealing time, a larger spectral gap yields a final state closer to the ground state and, thus, a higher-quality solution. 
The results show that, as expected, the spectral gap shrinks gradually with increasing problem size. While two data points are not enough to extrapolate a trend, the exercise shows how the performance of quantum annealing depends on the hyperparameters. In particular, the observations suggest keeping the number of neurons in each layer %
small.%

\begin{figure}
    \subfigure[\footnotesize $N_\ell = 1$ neuron]{ \includegraphics[width=.5\linewidth]{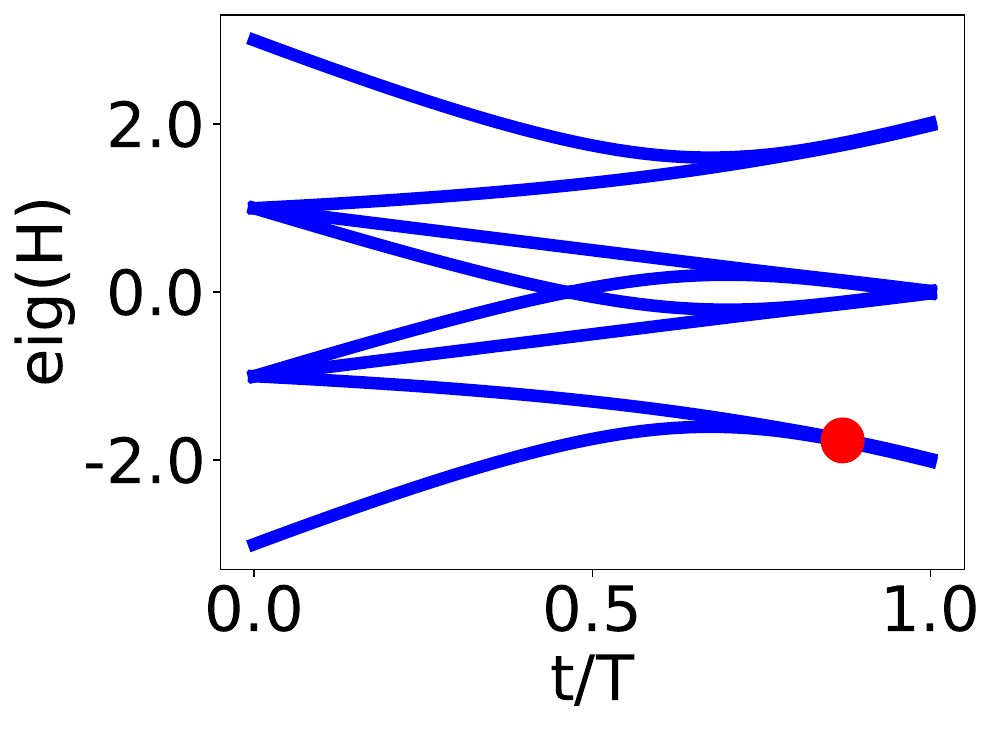}}
    \subfigure[\footnotesize $N_\ell = 2$ neurons]{\includegraphics[width=.5\linewidth]{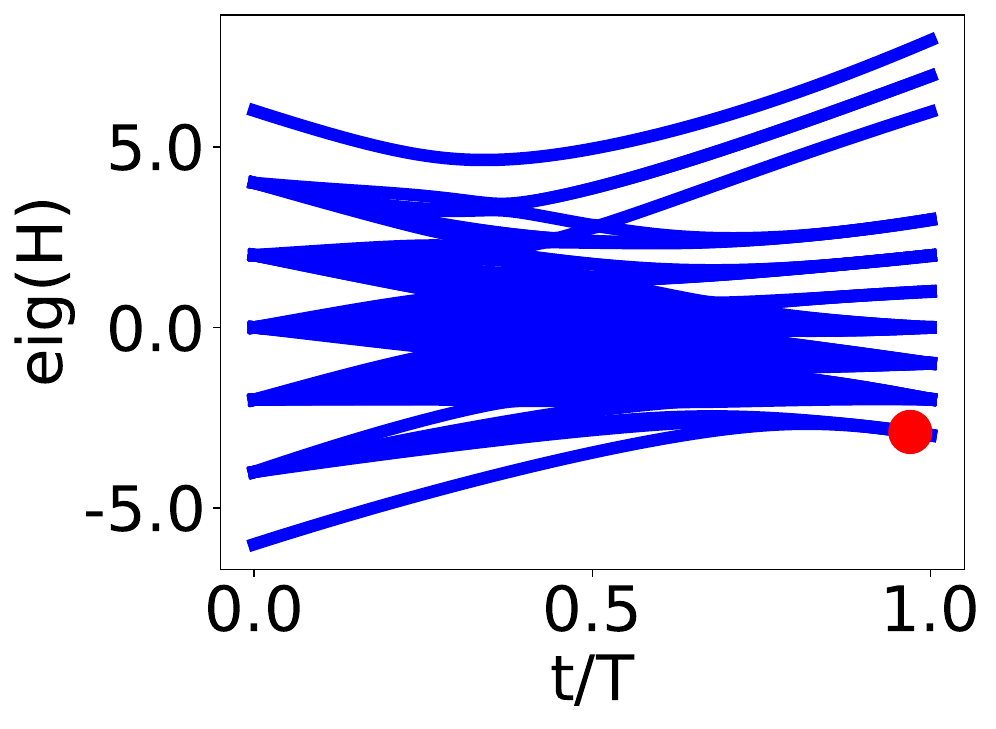}}
\vspace{5pt}
\caption{\footnotesize The eigenvalues of the QUBO Hamiltonian in~\cref{prop:ising_for_bp_paper} plotted as a function of annealing time ($t/T$) while training a binary linear layer with two neurons on the UCI-adult. The red bar represents the eigenvalue gap between the ground level and the first excited level not evolving into the ground state. The minimal spectral gap amounts to $0.03$ and $0.01$ in (a) and (b), respectively. 
    } %
\label{fig:eigs_all}
\end{figure}
\section{Extended Related Work}
\label{sec:related}

\paragraph{Binary neural networks (BNNs)}
Compared to full-precision neural networks, BNNs~\cite{BNN_2016} consume less memory and provide faster inference at the cost of certain information loss. Much effort has been devoted to mitigating performance degradation due to binarisation, as detailed in recent surveys~\cite{survey_2020,comprehensive_2021}. Here, we summarise important developments based on the recent surveys. We refer the reader to said surveys for a more complete overview.

Based on their binarisation strategies, BNN models 
can be categorised into \emph{naive} or \emph{optimisation-based} ones. BinaryConnect~\cite{binaryconnect_2015} is an early naive {training strategy}
which binarises the weights by a sign function. In this way, lightweight bitwise XNOR operations replace most real-valued arithmetic operations. BinaryNetg~\cite{BNN_2016} is an extension that binarises both weights and activations during inference and training. Dorefa-Net~\cite{dorefa-net_2018} defines new quantisation functions for weights, activations and gradients to accelerate network training. 
 
Optimisation-based BNNs attempt to address the accuracy drop more directly resulting from weight and activation binarisation in the naive models while preserving the compact nature. Following Qin et al.~\cite{survey_2020}, we further categorise such works based on their optimisation goals: 
\begin{itemize}[leftmargin=1em,itemsep=2pt,parsep=0pt,topsep=0pt]
\item \emph{Minimising the quantisation error.} XNOR-Net introduces a scaling factor for the binary weights and activations to generate a better approximation to the corresponding real-valued parameters \cite{xnor-net_2016}. XNOR-Net++ builds on this work and fuses the scaling factors of the weights and activations into one. The fused scaling factor is gained from discriminative training via backpropagation \cite{bulat2019xnor}. Dorefa-Net defines new quantisation functions for weights, activations and gradients instead of a pre-defined uniform quantiser to accelerate network training \cite{dorefa-net_2018}. LQ-Nets trains the quantisers for activations in each layer and weights in each channel jointly with the neural network \cite{zhang2018lq}. Parameterised Clipping Activation (PACT) shifts the activation distribution via learning a quantisation scaling factor during network training to achieve low bit-width precision \cite{choi2018pact}. LAB2 directly minimises binarisation loss for weights using a proximal Newton algorithm \cite{hou2016loss}.
\item \emph{Improving the loss function.} How-to-Train adds a regularisation term to encourage weight bipolarisation in addition to the task-specific loss term \cite{tang2017train}. LAB directly develops a loss function during layer-wise weight binarisation \cite{hou2016loss}. ReActNet introduces a spatial distributional loss to enforce the BNN output distribution to approximate that of the full-precision model, on top of activation distribution shift and reshaping using new quantisation functions \cite{liu2020reactnet}.
\item \emph{Reducing the gradient approximation error.} Another stream of work focuses on the nature of quantisation and activation functions to enable backpropagation and enhance forward propagation. Bi-Real Net approximates the derivative of the sign function for activations using a piecewise linear function to overcome the intrinsic discontinuity and proposes a magnitude-aware gradient for weights \cite{liu2018bi}. HWGQ-Net designs a forward quantised approximation and a backward continuous approximation of the ReLU activation function to address the learning inefficiency of BNNs \cite{cai2017deep}. 
\end{itemize}
Various training techniques including asymptotic optimisation, quantisation, gradient approximation and network structure transformation, are adopted in some of the learning-based models above. Standard techniques are used in optimisation, involving Adam, AdaMax, SGD and RMSprop. For more details please refer to Qin \etal~\cite{survey_2020}.

\paragraph{Quantum annealing}
Recent progress in \emph{quantum computer vision}~\cite{QuantumSync2021} has shown that various computer vision algorithms could benefit from QUBO-formulations and quantum annealing~\cite{QuantumSync2021,Chin_2020_ACCV,SeelbachBenkner2020, zaech2022adiabatic,Arrigoni2022,mccormick2022multiple,yurtsever2022q,Farina2023, Bhatia2023CCuantuMM}. Arthur~\etal~\cite{arthur2021qubo} has similarly shown the advantage for several machine learning models. A new paradigm was developed recently to harness the computational power of quantum annealers for the training of BNNs~\cite{sasdelli2021quantum} as well as for optimising in binary variables~\cite{yurtsever2022q}. As an early work in this direction, a simple version of BNN was examined. As alluded to above, our work improves the scalability of their approach by adopting a hybrid and incremental training scheme.

\paragraph{Quantum neural networks} %
As an emerging field, \emph{quantum machine learning}~\cite{biamonte2017quantum,schuld2015introduction,broughton2020tensorflow,wittek2014quantum} has shown that the training of linear regression, support vector machines and k-means clustering admit QUBO-like formulations~\cite{arthur2021qubo}, whereas the first neural network variants trained on quantum hardware were Boltzmann Machines. In parallel, \emph{quantum deep learning}~\cite{wiebe2014quantum,garg2020advances,kerenidis2019quantum}, the problem of creating quantum circuits that enhance the operations of neural networks by physically realising them, has emerged to alleviate some of the computational limitations of classical deep learning, thanks to the efficient training algorithms~\cite{kerstin2019efficient,beer2020training}. However, a severe challenge remains to be implementing non-linearities and other non-unitary operations with quantum unitaries~\cite{schuld2014quest,kerenidis2019quantum}. Similarly, \emph{quantum convolutional neural networks} (Q-CNNs)~\cite{cong2019quantum} have been developed with the mindset of implementing the analogue of classical CNN operations via quantum gates. These works are fundamentally different to ours, as they try to realise quantum implementations of machine learning algorithms, whereas we leverage quantum computation to solve the classical problem of neural network training.

\paragraph{Network quantisation}
Instead of training BNNs from data, network quantisation techniques aim to reduce the precision (including to binary) of a trained full-precision neural network in a manner that preserves accuracy; see Gholami et al. \cite{gholami2021survey} for a recent survey. We stress that \emph{quantisation} fundamentally differs in formulating a BNN for execution or training on \emph{quantum} computers.
In the current work, we are focusing on the most extreme kind of network quantisation: binary neural networks. More general types of quantisation can be formulated as a QUBO in principle~\cite{nagel2021white}.

\paragraph{Binary graph neural networks}
As opposed to the plethora of methods dealing with the binarisation and quantisation of ordinary neural networks, binary graph neural networks are under-explored. Only a few works exist~\cite{wang2021bi,bahri2021binary,wang2021binarized} and most of them operate on the principles of XNOR-Net and its variants, aiming to binarise the multiplication. Still, re-quantisation remains to be a necessary operation to avoid full-precision computations~\cite{yao2022fullprecision}. On a parallel track, developing efficient binary graph operators~\cite{jing2021meta} or quantising graph neural networks~\cite{zhu2023rm,jing2021meta,huang2022epquant} as well as GNN-compression are somewhat studied. We refer the reader to a recent survey~\cite{wei2022graph} and find it worth mentioning that none of these procedures are friendly for implementation on quantum hardware.

\end{document}